\theoremstyle{definition}
\newtheorem{theorem}{Theorem}
\newtheorem{proposition}{Proposition}
\newtheorem{lemma}{Lemma}
\begin{document}

\title{Drift-Aware Federated Learning: A Causal Perspective}

\author{Yunjie Fang, Sheng Wu, Tao Yang,~\IEEEmembership{Member,~IEEE,} Xiaofeng Wu and Bo Hu,~\IEEEmembership{Member,~IEEE,}
        % <-this % stops a space
\thanks{Y. Fang, S. Wu, T. Yang (Corresponding author), X. Wu and B. Hu are with the Department of Electronic Engineering, School of Information Science and Technology, Fudan University, Shanghai 200438, China (e-mail: \{fangyj21, swu22\}@m.fudan.edu.cn, \{taoyang, xiaofeng wu, hfeng, bohu\}@fudan.edu.cn). H. Feng and B. Hu are also with the the Shanghai Institute of Intelligent Electronics and Systems, Shanghai 200433, China.}% <-this % stops a space
}

\maketitle

\begin{abstract}
Federated learning (FL) facilitates collaborative model training among multiple clients while preserving data privacy, often resulting in enhanced performance compared to models trained by individual clients. However, factors such as communication frequency and data distribution can contribute to feature drift, hindering the attainment of optimal training performance. This paper examine the relationship between model update drift and global as well as  local optimizer from causal perspective. The influence of the global optimizer on feature drift primarily arises from the participation frequency of certain clients in server updates, whereas the effect of the local optimizer is typically associated with imbalanced data distributions.To mitigate this drift, we propose a novel framework termed Causal drift-Aware Federated lEarning (CAFE). CAFE exploits the causal relationship between feature-invariant components and classification outcomes to independently calibrate local client sample features and classifiers during the training phase. In the inference phase, it eliminated the drifts in the global model that favor frequently communicating clients.Experimental results demonstrate that CAFE's integration of feature calibration, parameter calibration, and historical information effectively reduces both drift towards majority classes and tendencies toward frequently communicating nodes.
\end{abstract}

\begin{IEEEkeywords}
Federated learning, Structural Causal Model, Drift-Aware.
\end{IEEEkeywords}

\section{Introduction}
\IEEEPARstart{F}{ederated} learning (FL) has emerged as a novel paradigm within distributed computing, attracting significant attention due to its exceptional capacity to leverage data residing on numerous edge devices while strictly adhering to stringent data privacy and security protocols \cite{ref16}. Despite its numerous advantages, FL faces substantial challenges that hinder its widespread adoption and optimal operational efficiency. One of the most significant challenges is feature drift, particularly in environments characterized by non-independent and identically distributed (non-iid) data. feature drift can be attributed to communication frequency imbalance and class imbalance. These two forms of imbalance can lead to unfair treatment of underrepresented classes and clients with limited participation. Class imbalance subtly drifts the optimizer's trajectory towards the majority class, resulting in drift model predictions. Participation imbalance exacerbates this issue, as it is induced by the inherent heterogeneity in computing and communication capabilities among participating nodes. The aggregation process tends to favor nodes that frequently contribute their locally optimized models to the central server.

As the class imbalance is concerned, when the majority class in the dataset far exceeds other classes, the model overfits to the majority class, thereby reducing its ability to recognize minority classes. This leads to suboptimal model performance and poor generalization in complex real-world tasks such as Internet of Things (IoT) applications or medical image recognition \cite{ref27,ref23}. A common solution is to introduce class prior knowledge at the server side, which can effectively mitigate the negative impact of class imbalance on FL models. However, this strategy has a notable limitation: class prior knowledge is often unavailable in many real-world scenarios. Meanwhile, existing research on participant imbalance primarily focuses on convergence or fairness issues under imbalanced conditions, while the mechanisms by which participant imbalance affects federated learning network performance remain underexplored.

Multiple clients scattered across different locations independently collect data from non-iid distributions. They initialize and train local models based on a shared global model to obtain local optimal solutions. Limited by participant imbalance, some frequently communicating clients may dominate the direction of model updates, causing the global model to overly rely on their update directions \cite{ref28,ref15}. A common issue in mobile edge computing, clients in poor channel conditions may never be selected \cite{ref33}. Typical solutions include controlling client selection probability \cite{ref34,ref35,ref36} or dynamically adjusting the federated learning framework or training procedures based on client capabilities \cite{ref37,ref38,ref39}. While these approaches mitigate feature drift to some extent, controlling client selection probabilities can result in passive waiting for some clients, leading to outdated gradients. On the other hand, dynamical adjustment methods may underutilize data from clients with limited computational power.

To reveal the fundamental causes of feature drift, this paper first theoretically analyzes the complex mechanisms underlying the drifts in the traditional FL framework and identifies the inherent attributes of classifier feature drifts. We observe that feature drift arises from both the local optimizer and the global optimizer. Specifically, the drift produced by the local optimizer stems from the imbalance in local data distribution, while the drift produced by the global optimizer results from differences in client participation frequency during global aggregation.
\IEEEpubidadjcol

To address these challenges, this paper proposes a novel framework, causal drift-aware federated learning (CAFE), which provides a comprehensive and essential calibration framework for feature drift. We construct a causal graph to depict the causal relationship between sample features and federated classification results. During the training phase, CAFE utilizes the causal relationship between feature invariance components and classification results to calibrate local sample features and classifiers of clients. In the inference phase, it corrects the drift of the global model which has been towards frequently communication clients. 

\begin{figure*}
    \centering
    \includegraphics[width=0.7\linewidth]{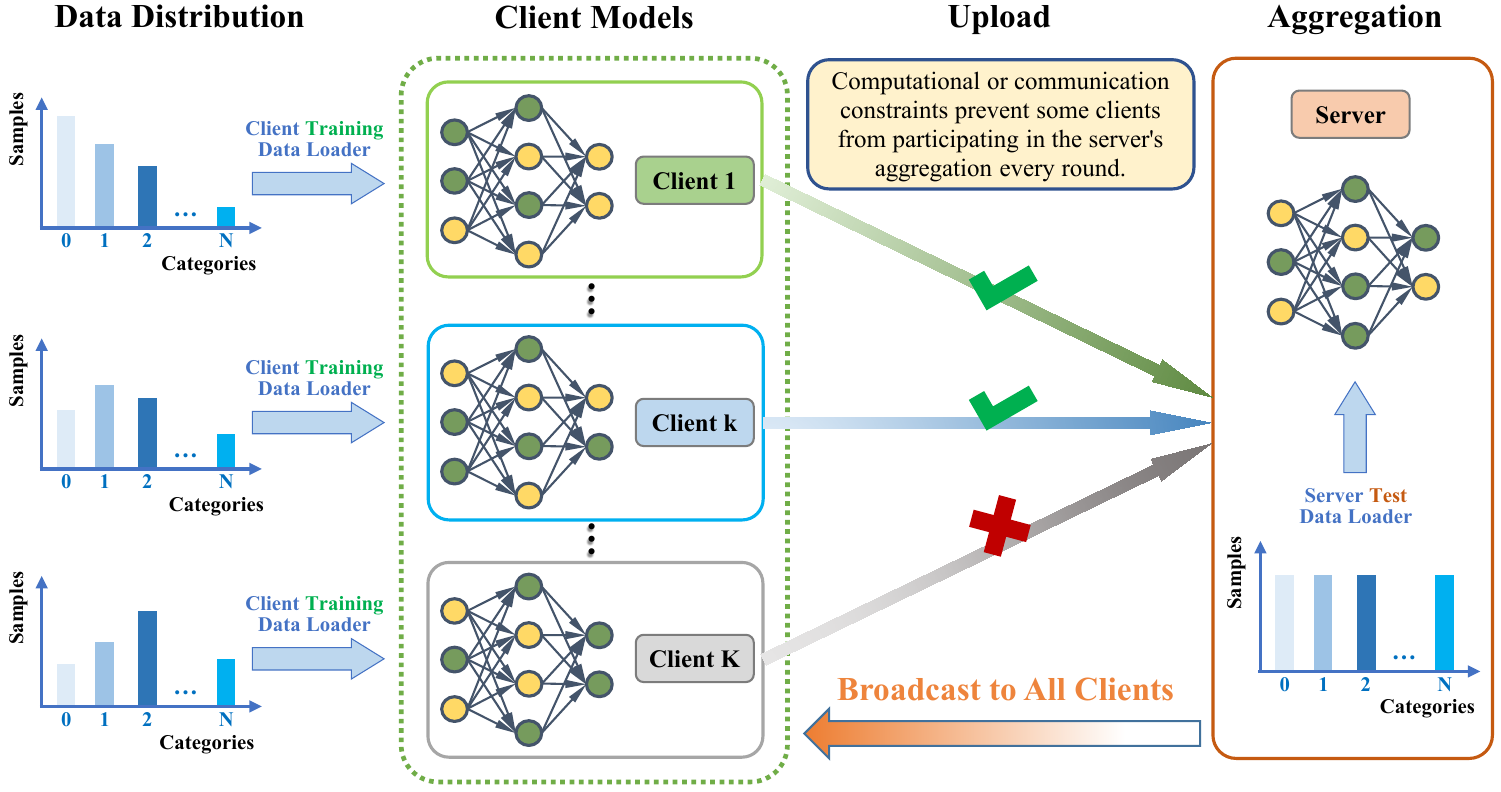}
    \caption{The reasons for feature drift in federated learning. The non-iid data distributions among clients cause their models to converge to different optimal points. Clients trained on class-imbalanced data exhibit low recognition accuracy for minority classes, while participation imbalance results in the global model being biased towards clients that communicate more frequently.}
    \label{fig:enter-label}
\end{figure*}

To the best of our knowledge, this work is the first to highlight the critical role of class sample size and client communication frequency in achieving drift-aware federated learning. Experimental results demonstrate that the proposed method cultivates a drift-aware model resilient to the detrimental effects of class imbalance and participation imbalance.

Specifically, our contribution can be expired into three points as follows:

\begin{itemize}
\item Our study elucidates the impact of device heterogeneity (e.g., client computation and communication time) and statistical heterogeneity (e.g., non-iid, quantity and quality of client data) on feature drift, providing theoretical guidance for understanding feature drift in federated learning.

\item To uncover causal relationships between the federated training process and feature drift, we designed a causal graph as a benchmark for causal inference of feature drift in federated learning. This graph models the causal relationships between sample features, classification results, and optimizers.

\item We introduce CAFE, a drift-aware federated learning approach based on causal inference. CAFE adjusts the classifier using the causal relationship between sample features and classification results, effectively extracting the drift direction through model parameters and momentum information. The algorithm asynchronously aggregates gradients uploaded by clients on the server side and performs causal training locally, thus learning causally invariant features. Notably, it does not require prior knowledge of the complex data distribution or communication frequency of clients during training. During inference, causal reasoning corrects drifts introduced by the global optimizer. Experimental evidence demonstrates that the proposed method outperforms previous state-of-the-art approaches under both general conditions and extreme settings of data and device heterogeneity.

\end{itemize}

The remainder of this paper is organized as follows. Section \ref{sec.2} reviews the related work. Section \ref{sec.3} introduces the terms and definitions of feature drift. The analysis of the mechanism of feature drift and the proposal of the CAFE method are presented in Sections \ref{sec.4} and \ref{sec.5}, respectively. Simulation results are provided in Section \ref{sec.6}, and conclusions and future work are presented in Section \ref{sec.7}. 

\section{Related Work}
\label{sec.2}
In this section, we concentrate on client-based methods that are closely related to our work while also providing a brief overview of alternative approaches. Comprehensive studies can be found in \cite{ref54}, \cite{ref55}, and \cite{ref57}.

\subsection{Feature Drift in Federated Learning} 
To address the challenge of feature drift in FL, there has been significant interest in developing methods to accurately identify and correct both the magnitude and direction of drift. These methods include sampling techniques, algorithm-centric approaches, and system-centric strategies. However, it is important to note that sampling does not necessarily guarantee an improvement in classification accuracy; in some cases, it may even lead to a degradation of model performance \cite{ref40}. 
Algorithm-centric methods (e.g. FedProx \cite{ref13} and SCAFFOLD\cite{ref23}) modify loss functions to enhance the sensitivity of classification algorithms towards minority classes. Nevertheless, these methods primarily focus on leveraging differences among data models to enhance performance without directly addressing the underlying causes of drift.
System-centric methods address class imbalance by modifying the structural framework of federated learning itself\cite{ref15,ref41}. However, they often necessitate prior knowledge regarding class distribution among clients or require sharing backbone networks—conditions which may prove impractical in real-world scenarios.

In addition, while existing research on participation imbalance primarily focuses on convergence or fairness issues, this area is also closely intertwined with feature drift. Typical solutions encompass client selection probability control methods and client capability adaptation methods. Client selection probability control methods \cite{ref34,ref35,ref36} ensure equitable participation opportunities for all clients \cite{ref42,ref43}.
Client capability adaptation methods dynamically adjust the federated learning model framework in accordance with client capabilities \cite{ref37,ref38,ref39,ref24}. However, they necessitate the deployment of additional scheduling algorithms on the server side, thereby increasing the server's computational load across diverse implementation environments.

To the best of our knowledge, there is limited research that comprehensively addresses class imbalance and participation imbalance in federated learning to mitigate global feature drift. ClassTer tackles this issue from the perspective of personalized federated learning (PFL)\cite{ref58}. It performs PFL on mobile devices through clustering and distillation methods, aiming to solve test-time data shifts in the presence of mobile device heterogeneity and data heterogeneity. However, while this solution has been successful in PFL, the clustering method is insufficient to reduce drift at its source in a centralized federated learning architecture and does not analyze how device and data heterogeneity lead to shifts.

\subsection{Causal Inference in Federated Learning}
Causal inference is a critical research direction in statistics and machine learning, aiming to identify and quantify causal relationships between variables from data. Causal intervention, an important subfield within causal inference, involves using "intervention" operations to block spurious correlations between variables, thereby revealing true causal effects. For instance, in human multimodal language understanding, the SuCI (Subject Causal Intervention) module \cite{ref51} employs backdoor adjustment theory to eliminate confounding effects of subjects on model predictions, significantly enhancing the model's generalization ability. Similarly, PACIFI (Preference-aware Causal Intervention) \cite{ref50} addresses the issue of user preferences being confounded in sequential recommendation systems through front-door adjustment and counterfactual data augmentation.

Several studies in federated learning have introduced causal intervention techniques to uncover causal relationships in data\cite{ref56}. FedCausal \cite{ref48} unifies local and global optimization into a fully directed acyclic graph (DAG) learning process with a flexible optimization objective, adaptively handling both homogeneous and heterogeneous data. CausalFL \cite{ref49} recovers the conditional distribution of missing confounders given observed confounders from dispersed data sources to identify causal effects. FedCSL \cite{ref47} employs a federated local-to-global learning strategy to address scalability issues. CausalRFF \cite{ref46} learns data similarity and causal effects from multiple dispersed data sources in a federated setting. However, these studies have not specifically focused on the causal relationships introduced by the federated learning framework itself, which is a key factor contributing to feature drift.

\section{Preliminaries and Problem Formulation}
\label{sec.3}
\subsection{Terminology}
We denote the training set with $N$ samples as $\{(x_i,y_i)\}_{i=1}^N$, where each sample consists of a embedding vector $x_i$ and its corresponding label $y_i$.Without loss of generality, the parameterized classification model $\boldsymbol{w}=\{\boldsymbol{\theta},\boldsymbol{\phi}\}$ can be decomposed into a feature extractor (all layers except the last, represented as $f_\theta{:}\mathcal{X}\to\mathcal{H}$ and a linear classifier (the last layer, represented as $f_\phi{:}\mathcal{H}\to\mathbb{R}^{[C]}$. Specifically, the feature extractor maps a sample $\boldsymbol{x}$ to a embedding vector $\boldsymbol{h}=f_{\boldsymbol{\theta}}(\boldsymbol{x})$ in the feature space $\mathcal{H}$, and the classifier generates a probability distribution $f_{\phi}\left(\boldsymbol{h}\right)$ as a prediction for $\boldsymbol{x}$.

\subsection{Federated Learning}
We consider a federated learning framework with \(K\) clients, where \(C\) being the total number of classes. Each client has its own dataset following the distribution \(\mathcal{P}_k(x, y)\), i.e., \(\mathcal{D}_k \sim \mathcal{P}_k(x, y)\) with \(n_k\) data samples. The global dataset is the union of all client datasets, denoted as \(\mathcal{D} = \cup_{k=1}^K \mathcal{D}_k \sim \mathbb{P}\) on \(\mathbb{R}^d \times \mathbb{R}\), with the total number of data samples being \(n = \sum_{k=1}^K n_k\).

Federated learning algorithms typically require multiple rounds of local and global processes to optimize the global model. At the beginning of each round, a subset of clients \(\mathcal{S} \subseteq \mathcal{K}\) is selected, where \(\mathcal{K}\) is the set of all clients. Then, each selected client \(k \in \mathcal{S}\) performs the following local process in parallel:

1. During the local training process, clients download the global parameters, i.e., \(\boldsymbol{w}_k \leftarrow \boldsymbol{w}_G\). We use the subscripts “\(k\)” and “\(G\)” to distinguish between local and global parameters.

2. Subsequently, clients update the downloaded model on their local training set \(\{(x_{k,i}, y_{k,i})\}_{i=1}^{N_k}\). Let \(\ell: \mathbb{R}^C \times \mathcal{Y} \to \mathbb{R}_+\) be the loss function, then \(\mathcal{L}_k(\boldsymbol{w}_G)\) is the expected local objective function on the \(k\)-th client's local dataset \(\mathcal{D}_k\):
\begin{equation}
\mathcal{L}_k(\boldsymbol{w}_G) = \mathbb{E}_{(x,y) \in \mathcal{D}_k} [\ell(\boldsymbol{w}_G; (x, y))].
\end{equation}

During the server aggregation process, the server collects the updated parameters from these clients and optimizes the combination of these local losses. We denote \(\{\hat{\boldsymbol{w}}_k\}\) as the parameter updates from the \(k\)-th client, and the server updates the global model via:
\begin{equation}
\boldsymbol{w}_G \leftarrow \frac{1}{|\mathcal{S}|} \sum_{k \in \mathcal{S}} \hat{\boldsymbol{w}}_k.
\end{equation}

In the standard formulation of FL, the goal is to minimize the global average of the local objectives:
\begin{equation}
\min_{\boldsymbol{w}_G \in \mathbb{R}^d} \mathcal{L}(\boldsymbol{w}_G) = \min_{\boldsymbol{w}_G \in \mathbb{R}^d} \sum_{k=1}^K p_k \mathcal{L}_k(\boldsymbol{w}_G),
\end{equation}
where \(p_k\) is the weight assigned to each client, satisfying \(p_k \geq 0\) and \(\sum_{k=1}^K p_k = 1\). \(\mathcal{L}_k(\boldsymbol{w}_G; \mathcal{D}_k)\) is the local loss as shown in Eq. (1), computed based on the local data \(\mathcal{D}_k\).

\subsection{feature drift in Federated Learning}
In this paper, feature drift refers to the deviation of the features extracted by the model (i.e., the embedding vector) due to changes in data distribution and the aggregation process in federated learning. It can be defined as the discrepancy between the embedding vector $\boldsymbol{h}$ and the optimal feature representation $\boldsymbol{h}^{*}$, that is, $\boldsymbol{h}\neq \boldsymbol{h}^{*}$.

In this section, we propose a taxonomy that systematically categorizes and describes the factors contributing to feature drift in federated learning from three distinct perspectives, whereas prior studies have only partially explored these aspects. These include:

$\cdot$ \textbf{Non-iid Data Distribution}: The data distribution for each client is denoted as  $P_{(.)}(x,y)$, where $x$ represents the features and $y$ represents the labels. In the non-iid scenario, the data distributions across different clients are not identical, i.e., $P_i(x,y)\neq P_j(x,y)$ for $i\neq j$.

Moreover, the non-iid nature of client data can manifest in several specific forms:

- Feature Distribution Skew: The feature distributions $P_{(.)}(x) $ differ across clients.

- Label Distribution Skew: The label distributions $P_{(.)}(y)$ vary among clients.

- Quantity Skew: The amount of data held by different clients is uneven, potentially causing data from some clients to disproportionately influence the global model.

$\cdot$ \textbf{Class Imbalance}: For class-imbalanced datasets, the label set \(K\) contains majority classes \(j_c \in \mathcal{J}_c\) and minority classes \(r_c \in \mathcal{R}_c\). We have \(\mathcal{J}_c \cap \mathcal{R}_c = \emptyset\), where \(\emptyset\) denotes the empty set. The number of training samples in each class satisfies \(n_j \gg n_r > 0\).

$\cdot$ \textbf{Participant Imbalance}: For a node set \(N\) with imbalanced communication frequencies, it includes Frequent Communicating Nodes \(j_n \in \mathcal{J}_n\) and Sparsely Communicating Nodes \(r_n \in \mathcal{R}_n\). Let \(\mathcal{N}_j\) and \(\mathcal{N}_r\) be the communication frequencies of frequent and sparse communicating nodes, respectively, satisfying \(\mathcal{N}_j \gg \mathcal{N}_r > 0\).

We designate the three factors above as federated learning drift factors (FLDF). When these factors coexist in the federated framework, according to \cite{ref37,ref3}, the performance of federated learning can sharply decline. Specifically, non-iid data leads to divergent update directions from clients, which are suboptimal for the test set. Class imbalance causes client updates to favor local majority classes. Participant imbalance results in global aggregation updates favoring frequently communicating clients. Collectively, these issues contribute to feature drift.

\section{Theoretical Analysis: The Mechanism Underlying feature drift}
\label{sec.4}
This section provides a comprehensive analysis of the drifts that emerge during the federated learning training phase. It introduces the drift parameters $\hat{\boldsymbol{d}^{R}_G}$ and $\hat{\boldsymbol{d}^{E}_k}$, which serve as a theoretical foundation for constructing the causal graph discussed in Section \ref{sec.5}. The analysis delves into how feature drift is triggered by both global and local optimizers, examining the underlying mechanisms, mathematical expressions, and the specific influence magnitude of each component.

\subsection{The Composition of feature drift}
To evaluate the model's generalization ability and prevent the shortcomings of the model in some categories from being masked, we refer to the definition of feature drift presented in Section \ref{sec.3}. Specifically, we observe the presence of feature drift by comparing the updates of the embedding vector in a system with various FLDF and in an ideal federated system.

In particular, consider an asynchronous federated system A that incorporates all types of FLDF, where client data distributions exhibit class-imbalance properties and are non-iid. During global aggregation, the server does not wait for straggling clients but aggregates directly after a fixed waiting time. Conversely, the ideal federated system B is a synchronous federated learning system where each client has a uniformly distributed dataset with the same total number of samples, and all clients participate in aggregation in each communication round, i.e., $P(y_i=c)=\frac{1}{C},P(k\in \mathcal{S})=\frac{1}{K}$.

For both federated systems, the classifier parameters of the $k$-th client, $\phi_{k}$, are represented as a $C$-dimensional weight vector $\{\phi_{k,c}\}_{c=1}^{C}$. Samples from the c-th class and other classes are respectively termed positive samples and negative samples. 

The embedding vector extracted by the feature extractor undergoes an affine transformation to generate the raw prediction scores (commonly referred to as logits) for each class. Here, we explicitly set the drift term of the affine transformation to zero, as it has negligible impact on classification performance. The softmax operator normalizes the logits (\( z_{i,c} = \boldsymbol{\phi}_c^T \boldsymbol{h}_i \)) and returns probability $p_{i,c}$ which means $i$-th sample belongs to class $c$:
\begin{equation}
    \begin{aligned}
        p_{i,c} = \text{softmax}(z_{i,c}) &= \frac{e^{z_{i,c}}}{\sum_{j=1}^{C} e^{z_{i,j}}}\\
        &=\frac{\exp(\boldsymbol{\phi}_{c}^\top\boldsymbol{h}_{i})}{\sum_{j=1}^{C}\exp(\boldsymbol{\phi}_{j}^\top\boldsymbol{h}_{i})},
    \end{aligned}
\end{equation}

We set the softmax input drift to zero, as it has minimal impact on classification performance. Using cross-entropy loss as the supervised loss, the supervised loss for the $i$-th client's classifier is expressed as:
\begin{equation}
    \begin{aligned}
        \mathcal{L}_{\sup_i}(\boldsymbol{\phi}_c,\boldsymbol{h}_i)&=\mathbb{E}_{(\boldsymbol{h}_i,y_i)\in\mathcal{D}_k}[l_{\sup_i}(\boldsymbol{\phi}_c;(\boldsymbol{h}_i,y_i))]\\
        &=\sum_{i=1}^N\sum_{c=1}^C\mathcal{I}\left\{y_i=c\right\}\log p_{i,c}\\
        &=\sum_{i=1}^N\sum_{c=1}^C\mathcal{I}\left\{y_i=c\right\}\log\frac{\exp\left(\boldsymbol{\phi}_{k,c}^\top\boldsymbol{h}_{i}\right)}{\sum_{j=1}^C\exp\left(\boldsymbol{\phi}_{k,j}^\top\boldsymbol{h}_{i}\right)}
    \end{aligned}
\end{equation}

where $\mathcal{I}\{\cdot\}$ is the indicator function.

Using the loss function to calculate the update of feature \( \boldsymbol{h}_{i} \), we can observe the presence of feature drift. The optimizer \( \boldsymbol{f}_{k} \) of the \( k \)-th client employs the momentum gradient descent (MGD) method to update the model, with a momentum decay rate of \( \mu_k \). As the model parameters are updated, the partial derivatives of the features change and exhibit a shift. We describe this as follows.

\begin{theorem}
(General Form of Feature Update) The update of features is influenced by the data distribution of the current batch, the local optimizer, and the global model, all of which collectively amplify the drift in feature updates. The general form of feature updates in both Systems A and B is: $-\sum_{i=1}^N\left(\mathcal{I}\left\{y_i=c\right\}-p_{i,c}\right)[\boldsymbol{\phi}^{(r,0)}_{k,c}
        -\eta(\boldsymbol{\nu}^{(1)}_{k,c}+\boldsymbol{\nu}^{(2)}_{k,c}+\cdots+\boldsymbol{\nu}^{(e)}_{k,c})]$.
\end{theorem}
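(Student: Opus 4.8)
The plan is to obtain the stated expression as the partial derivative of the supervised cross-entropy loss in Eq.~(5) with respect to the embedding vector $\boldsymbol{h}_i$, and then to replace the classifier weight $\boldsymbol{\phi}_{k,c}$ by its momentum-gradient-descent trajectory across the local epochs. The structural form will coincide in Systems A and B because the chain-rule computation never references the data distribution or the participation schedule directly; those enter only through the numerical values of $p_{i,c}$ and of the accumulated momenta $\boldsymbol{\nu}^{(t)}_{k,c}$.

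First I would differentiate through the softmax in Eq.~(4). Using the Jacobian $\partial p_{i,j}/\partial z_{i,c}=p_{i,j}(\delta_{jc}-p_{i,c})$ together with $\sum_j \mathcal{I}\{y_i=j\}=1$, the per-sample derivative with respect to the logit $z_{i,c}$ collapses to the residual $p_{i,c}-\mathcal{I}\{y_i=c\}$ under the minimization convention for cross-entropy. Since $z_{i,c}=\boldsymbol{\phi}_c^\top\boldsymbol{h}_i$ gives $\partial z_{i,c}/\partial\boldsymbol{h}_i=\boldsymbol{\phi}_{k,c}$, the chain rule then yields $\partial\mathcal{L}_{\sup_i}/\partial\boldsymbol{h}_i=-\sum_{i=1}^N(\mathcal{I}\{y_i=c\}-p_{i,c})\,\boldsymbol{\phi}_{k,c}$, which already exhibits the leading $-\sum_i(\mathcal{I}\{y_i=c\}-p_{i,c})$ factor of the claim, with $\boldsymbol{\phi}_{k,c}$ still in its current epoch-$e$ state.

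Next I would unroll the MGD recursion on the classifier head. Writing the momentum update as $\boldsymbol{\nu}^{(t)}_{k,c}=\mu_k\boldsymbol{\nu}^{(t-1)}_{k,c}+\nabla_{\boldsymbol{\phi}_{k,c}}\mathcal{L}^{(t)}$ and the parameter step as $\boldsymbol{\phi}^{(r,t)}_{k,c}=\boldsymbol{\phi}^{(r,t-1)}_{k,c}-\eta\,\boldsymbol{\nu}^{(t)}_{k,c}$, a telescoping sum over $t=1,\dots,e$ gives $\boldsymbol{\phi}^{(r,e)}_{k,c}=\boldsymbol{\phi}^{(r,0)}_{k,c}-\eta(\boldsymbol{\nu}^{(1)}_{k,c}+\cdots+\boldsymbol{\nu}^{(e)}_{k,c})$, where $\boldsymbol{\phi}^{(r,0)}_{k,c}$ is the downloaded global head at the start of round $r$. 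Substituting this trajectory for the current classifier weight in the feature gradient of the previous step reproduces precisely the bracketed factor, and hence the full expression, for a generic client $k$ and round $r$.

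The main obstacle I anticipate is less the telescoping itself than justifying that one symbolic form governs both systems. I would argue that Systems A and B differ only in (i) the label statistics entering $p_{i,c}$ through the class composition of the current batch and (ii) the effective momenta $\boldsymbol{\nu}^{(t)}_{k,c}$, whose magnitudes are modulated by how often client $k$ is aggregated; neither mechanism alters the chain-rule structure, so the drift is carried entirely inside these two data- and schedule-dependent quantities. A secondary point requiring care is the sign bookkeeping between the loss written in Eq.~(5) and the minimization step, so that the leading minus sign and the residual $(\mathcal{I}\{y_i=c\}-p_{i,c})$ emerge with the orientation stated in the theorem.
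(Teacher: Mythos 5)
Your proposal is correct and follows essentially the same route as the paper: chain rule through the softmax to obtain the residual factor $-\sum_{i}(\mathcal{I}\{y_i=c\}-p_{i,c})$ times $\partial z_{i,c}/\partial\boldsymbol{h}_i=\boldsymbol{\phi}_{k,c}$, followed by telescoping the momentum-gradient-descent recursion to replace $\boldsymbol{\phi}^{(r,e)}_{k,c}$ by $\boldsymbol{\phi}^{(r,0)}_{k,c}-\eta(\boldsymbol{\nu}^{(1)}_{k,c}+\cdots+\boldsymbol{\nu}^{(e)}_{k,c})$. Your compact use of the Jacobian $\partial p_{i,j}/\partial z_{i,c}=p_{i,j}(\delta_{jc}-p_{i,c})$ merely condenses the two cases the paper treats separately, and your remarks on the sign convention and on why one symbolic form covers both Systems A and B match the paper's implicit reasoning.
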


\begin{proof}
Consider an input sample \(x_i\) entering the model training and its impact on feature updates. Assume, without loss of generality, that this sample belongs to class \(c\), i.e., \(y_i = c\). We compute the partial derivative of the loss function \(\mathcal{L}\) with respect to the feature vector \(\boldsymbol{h}_{i}\). Applying the chain rule, the feature vector update can be expressed as:

   \begin{equation}
    \begin{aligned}
        (\frac{\partial\mathcal{L}}{\partial\boldsymbol{h}_{i}})^{(e)}&=\frac{\partial\mathcal{L}}{\partial p_{i}}\cdot \frac{\partial p_{i}}{\partial z_{i}}\cdot \frac{\partial z_{i}}{\partial h_{i}}
    \end{aligned}
\end{equation}

Breaking down the derivative, we first compute the derivative of the cross-entropy loss \(\mathcal{L}\) with respect to the probability \(p_{i}\):
\begin{equation}
\begin{aligned}
    \frac{\partial\mathcal{L}}{\partial h_{i}}=
    &\sum_{i=1}^N\left(-\frac{\partial}{\partial p_{i,c}}\left(\mathcal{I}\{y_i=c\}\log p_{i,c}\right)\cdot \frac{\partial p_{i,c}}{\partial z_{i,c}}\cdot \frac{\partial z_{i,c}}{\partial h_{i}}\right.\\
    &\left.-\frac{\partial}{\partial p_{i,j}}\left(\sum_{j=1,j\neq c}^C\mathcal{I}\{y_i=j\}\log p_{i,j}\right)\cdot \frac{\partial p_{i,j}}{\partial z_{i,c}}\cdot \frac{\partial z_{i,c}}{\partial h_{i}}\right)\\
    =&\sum_{i=1}^N\left(-I\{y_i=c\}\frac{\partial\log p_{i,c}}{\partial p_{i,c}}\cdot \frac{\partial p_{i,c}}{\partial z_{i,c}}\cdot \frac{\partial z_{i,c}}{\partial h_{i}}\right.\\
    &\left.-\sum_{j=1,j\neq c}^CI\{y_i=j\}\frac{\partial\log p_{i,j}}{\partial p_{i,j}}\cdot \frac{\partial p_{i,j}}{\partial z_{i,c}}\cdot \frac{\partial z_{i,c}}{\partial h_{i}}\right)\\
    =&\sum_{i=1}^N\left(-\frac{\mathcal{I}\{y_i=c\}}{p_{i,c}}\cdot \frac{\partial p_{i,c}}{\partial z_{i,c}}\cdot \frac{\partial z_{i,c}}{\partial h_{i}}\right.\\
    &\left.-\sum_{j=1,j\neq c}^C\frac{\mathcal{I}\{y_i=j\}}{p_{i,j}}\cdot \frac{\partial p_{i,j}}{\partial z_{i,c}}\cdot \frac{\partial z_{i,c}}{\partial h_{i}}\right)
\end{aligned}
\end{equation}

Next, we compute the derivatives of the softmax probabilities \(p_{i,c}\) and \(p_{i,j}\) with respect to the logits \(z_{i,c}\):

\begin{equation}
\begin{aligned}
    \frac{\partial p_{i,c}}{\partial z_{i,c}}=&\frac{\partial}{\partial z_{i,c}}\left(\frac{e^{z_{i,c}}}{\sum_{j=1}^Ce^{z_{i,j}}}\right)\\
    =&\frac{e^{z_{i,c}}\sum_{j=1}^{C}e^{z_{i,j}}-e^{z_{i,c}}e^{z_{i,c}}}{(\sum_{j=1}^{C}e^{z_{i,j}})^{2}}\\
    =&\frac{e^{z_{i,c}}}{\sum_{j=1}^Ce^{z_{i,j}}}(1-\frac{e^{z_{i,c}}}{\sum_{j=1}^{C}e^{z_{i,j}}})\\
    =&p_{i,c}(1-p_{i,c})
\end{aligned}
\end{equation}

For \(\frac{\partial p_{i,j}}{\partial z_{i,c}}\), we have:

\begin{equation}
\begin{aligned}
    \frac{\partial p_{i,j}}{\partial z_{i,c}}=&\frac{\partial}{\partial z_{i,c}}\left(\frac{e^{z_{i,j}}}{\sum_{j=1}^Ce^{z_{i,j}}}\right)\\
    =&\frac{0\cdot\sum_{j=1}^{C}e^{z_{i,j}}-e^{z_{i,c}}e^{z_{i,j}}}{(\sum_{j=1}^{C}e^{z_{i,j}})^{2}}\\
    =&-p_{i,c}\cdot p_{i,j}
\end{aligned}
\end{equation}

Combining equations (8), (9), and (10), we obtain:

\begin{equation}
\begin{aligned}
    \frac{\partial\mathcal{L}}{\partial h_{i}}
    =&\sum_{i=1}^N\left(-\frac{\mathcal{I}\{y_i=c\}}{p_{i,c}}\cdot p_{i,c}(1-p_{i,c})\cdot \frac{\partial z_{i,c}}{\partial h_{i}}\right.\\
    &\left.-\sum_{j=1,j\neq c}^C\frac{\mathcal{I}\{y_i=j\}}{p_{i,j}}\cdot (-p_{i,c}\cdot p_{i,j})\cdot \frac{\partial z_{i,c}}{\partial h_{i}}\right)\\
    =&\sum_{i=1}^N\left(-\mathcal{I}\{y_i=c\}\cdot (1-p_{i,c})\cdot \frac{\partial z_{i,c}}{\partial h_{i}}\right.\\
    &\left.-\sum_{j=1,j\neq c}^C\mathcal{I}\{y_i=j\}\cdot (-p_{i,c})\cdot \frac{\partial z_{i,c}}{\partial h_{i}}\right)\\
    =&\sum_{i=1}^N\left(-\mathcal{I}\{y_i=c\}+p_{i,c}\sum_{j=1}^C\mathcal{I}\{y_i=j\}\right)\cdot \frac{\partial z_{i,c}}{\partial h_{i}}\\
    =&\sum_{i=1}^N(-\mathcal{I}\{y_i=c\}+p_{i,c})\cdot \frac{\partial z_{i,c}}{\partial h_{i}}
\end{aligned}
\end{equation}

Given \(z_{i,c} = \boldsymbol{\phi}_c^T \boldsymbol{h}_i\), it follows that:

\begin{equation}
\begin{aligned}
\frac{\partial z_{i,c}}{\partial h_{i}} = \boldsymbol{\phi}_{c}
\end{aligned}
\end{equation}

Combining equations (11) and (12), we derive the value of $(\frac{\partial\mathcal{L}}{\partial h_{i}})^{(r,e)}_k$ for the $k$-th client in the $r$-th communication round and the $e$-th training batch in federated learning:

\begin{equation}
\begin{aligned}
(\frac{\partial\mathcal{L}_k}{\partial h_{i}})^{(r,e)}= -\sum_{i=1}^N \left( \mathcal{I}\{y_i = c\} - p_{i,c} \right) \boldsymbol{\phi}^{(r,e)}_{k,c}
\end{aligned}
\end{equation}

The update formula for the local momentum gradient descent (MGD) optimizer is:
\begin{equation}
    \begin{aligned}
        m^{(r,e+1)}_k=\mu_G\cdot m^{(r,e)}_k+g_k^{(r,e+1)}\quad     \boldsymbol{w}^{(r,e)}_{k,c}=\boldsymbol{w}^{(r,e-1)}_{k,c}-\eta \boldsymbol{m}^{(r,e)}_{k,c}
    \end{aligned}
\end{equation}

Expanding $\boldsymbol{\phi}^{(e)}_{k,c}$ according to equation (12), we obtain:
\begin{equation}
    \begin{aligned}
        (\frac{\partial\mathcal{L}_k}{\partial\boldsymbol{h}_{i}})^{(r,e)}=&-\sum_{i=1}^N\left(\mathcal{I}\left\{y_i=c\right\}-p_{i,c}\right)[\boldsymbol{\phi}^{(r,e-1)}_{k,c} - \eta \boldsymbol{\nu}^{(r,e)}_{k,c}]\\
        =&-\sum_{i=1}^N\left(\mathcal{I}\left\{y_i=c\right\}-p_{i,c}\right)[\boldsymbol{\phi}^{(r,e-2)}_{k,c}\\
        &- \eta (\boldsymbol{\nu}^{(r,e-1)}_{k,c}+\boldsymbol{\nu}^{(r,e)}_{k,c})]\\
        =&\quad\cdots\cdots\\
        =&-\sum_{i=1}^N\left(\mathcal{I}\left\{y_i=c\right\}-p_{i,c}\right)[\boldsymbol{\phi}^{(r,0)}_{k,c}\\&
        -\eta(\boldsymbol{\nu}^{(r,1)}_{k,c}+\boldsymbol{\nu}^{(r,2)}_{k,c}+\cdots+\boldsymbol{\nu}^{(r,e)}_{k,c})]
    \end{aligned}
\end{equation}

This completes the proof.
\end{proof}

Among these terms, only $-\sum_{i=1}^N\left(\mathcal{I}\left\{y_i=c\right\}-p_{i,c}\right)$ is dependent on $i$, which we denote as \(\left(\frac{\partial\mathcal{L}_k}{\partial\boldsymbol{h}_{i}}\right)^{(r,e)}_{qtt} = -\sum_{i=1}^N \left(\mathcal{I}\{y_i = c\} - p_{i,c}\right)\), determined by the quantities of positive and negative samples in the current batch. The term $\boldsymbol{\phi}^{(r,0)}_{k,c}$ represents the initial model for the r-th communication round, which is the global model distributed by the server at the beginning of that round, is influenced by the global optimizer. Observing the term $-\eta(\boldsymbol{\nu}^{(1)}_{k,c}+\boldsymbol{\nu}^{(2)}_{k,c}+\cdots+\boldsymbol{\nu}^{(e)}_{k,c})$, it is evident that it is trained on each client's local data and thus captures the individual characteristics of each client. Consequently, Eq.(7) can be expressed as:
\begin{equation}
    \begin{aligned}
        (\frac{\partial\mathcal{L}_k}{\partial\boldsymbol{h}_{i}})^{(r,e)}= (\frac{\partial\mathcal{L}_k}{\partial\boldsymbol{h}_{i}})^{(r,e)}_{qtt}[(\frac{\partial\mathcal{L}_k}{\partial\boldsymbol{h}_{i}})^{(r,e)}_{glb}+ (\frac{\partial\mathcal{L}_k}{\partial\boldsymbol{h}_{i}})^{(r,e)}_{loc}]
    \end{aligned}
\end{equation}

To detect feature drift, we examine the differences in \((\frac{\partial\mathcal{L}_k}{\partial\boldsymbol{h}_{i}})^{(r,e)}_{qtt}\), \((\frac{\partial\mathcal{L}_k}{\partial\boldsymbol{h}_{i}})^{(r,e)}_{glb}\), and \((\frac{\partial\mathcal{L}_k}{\partial\boldsymbol{h}_{i}})^{(r,e)}_{loc}\) between systems A and B. 

First, for \((\frac{\partial\mathcal{L}_k}{\partial\boldsymbol{h}_{i}})^{(r,e)}_{qtt}\), if the \(i\)-th sample is a positive sample of class \(c\), \(\mathcal{I}\left\{y_i=c\right\}=1\) and \(\left(\mathcal{I}\left\{y_i=c\right\}-p_{i,c}\right)\) is positive. For negative samples of class \(c\), \(\mathcal{I}\left\{y_i=c\right\}=0\) and \(\left(\mathcal{I}\left\{y_i=c\right\}-p_{i,c}\right)<0\). Thus, more samples of class \(c\) lead to a larger \((\frac{\partial\mathcal{L}_k}{\partial\boldsymbol{h}_{i}})^{(r,e)}_{qtt}\).

Denote the terms related to the quantities of positive and negative samples in systems A and B that sampling from the current batch as \((\frac{\partial\mathcal{L}_k}{\partial\boldsymbol{h}_{i}})^{(r,e)}_{A_{qtt}}\) and \((\frac{\partial\mathcal{L}_k}{\partial\boldsymbol{h}_{i}})^{(r,e)}_{B_{qtt}}\). In system A, \(\mathbb{E}[(\frac{\partial\mathcal{L}_k}{\partial\boldsymbol{h}_{i}})^{(r,e)}_{A_{qtt}}]\) depends on the data distribution of the current client. In system B, with \(P(y_i=c)=\frac{1}{C}\), \(\mathbb{E}[(\frac{\partial\mathcal{L}_k}{\partial\boldsymbol{h}_{i}})^{(r,e)}_{B_{qtt}}]=\frac{N}{C}\). Clearly, \((\frac{\partial\mathcal{L}_k}{\partial\boldsymbol{h}_{i}})^{(r,e)}_{A_{qtt}} \neq (\frac{\partial\mathcal{L}_k}{\partial\boldsymbol{h}_{i}})^{(r,e)}_{B_{qtt}}\).

In the following two subsections, we will discuss the impact of global FLDF on the update of the embedding vector when only global factors are considered, and the influence of local FLDF on it when clients of the two systems are trained based on the global model of the same communication round.

\subsection{Feature drift from Global FLDF}

In this subsection, we focus on the impact of global FLDF on the update of the embedding vector \((\frac{\partial\mathcal{L}_k}{\partial\boldsymbol{h}_{i}})^{(r,e)}_{glb}=\boldsymbol{\phi}^{(0)}_{k,c}\) in systems A and B, and investigate the differences. Here, \(\boldsymbol{\phi}^{(0)}_{k,c}\) represents the model of the 0th batch in the rth communication round, which is sent by the server in the rth round, i.e., \(\boldsymbol{\phi}^{(0)}_i=\boldsymbol{\phi}^{(r)}_G\). In federated learning, global aggregation occurs at the end of each communication round, a process that can be analyzed to understand its impact on model parameters. Based on the global MGD optimizers, the global parameter update formula is given by:
\begin{equation}
\begin{aligned}
    m^{(r+1)}_G=\mu_G\cdot m^{(r)}_G+p^{(r)}g^{(r+1)}\quad w^{(r+1)}_G=w^{(r)}_G- m^{(r+1)}_G
\end{aligned}
    \end{equation}

Here, $g^{(r)}$ represents the gradients uploaded to the server by clients during the $r$-th communication round, and $p^{(r)}$ is the ratio of the data quantity from the client to the total global data volume.

Momentum, updated with a decay rate of $\mu_G$, adjusts $p^{(r)}$, which in turn updates the global model parameters. Gradients from various clients act as proxies, pulling the global parameters towards their respective client-specific directions. Clients that upload more frequently exert greater influence on the global model.

At the start of each new round, the server broadcasts the updated global model to all clients. As training progresses and the number of communication rounds increases, the global model becomes drift towards clients that frequently upload their local models. We describe the feature drift resulting from varying participation rates among clients as follows.

\begin{theorem}
(Feature Drift Caused by Client Heterogeneity) Influenced by the number of communications and the global optimizer, the general form of the influence from the global model during training is: $ (\frac{\partial\mathcal{L}_k}{\partial\boldsymbol{h}_{i}})^{(r,e)}_{glb}=\phi^{(r)}_G-\Sigma_{k=1}^{K}\Sigma_{s=1}^{\mathcal{S}}(\mu^{r-s+1}_G p^{(s-1)} \cdot \xi^{(s)}_{k})$.
\end{theorem}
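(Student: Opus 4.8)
The plan is to start from the identification established in the text immediately following Theorem~1, namely that the global component of the feature update is exactly the broadcast model, $(\frac{\partial\mathcal{L}_k}{\partial\boldsymbol{h}_{i}})^{(r,e)}_{glb}=\boldsymbol{\phi}^{(0)}_{k,c}=\boldsymbol{\phi}^{(r)}_G$. Proving the claim therefore reduces to expressing the global classifier $\boldsymbol{\phi}^{(r)}_G$ broadcast at round $r$ in terms of an initial model and the per-client gradient contributions accumulated over the preceding rounds. Since $\boldsymbol{\phi}^{(r)}_G$ is the classifier block of $\boldsymbol{w}^{(r)}_G$, I would work directly with the global MGD update $m^{(r+1)}_G=\mu_G m^{(r)}_G+p^{(r)}g^{(r+1)}$, $\boldsymbol{w}^{(r+1)}_G=\boldsymbol{w}^{(r)}_G-m^{(r+1)}_G$, applied coordinate-wise to the classifier parameters.

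First I would unroll the momentum recursion. Assuming the standard initialization $m^{(0)}_G=\boldsymbol{0}$, an induction on the round index yields the closed form $m^{(r)}_G=\sum_{s=1}^{r}\mu_G^{\,r-s}\,p^{(s-1)}g^{(s)}$, so that each past aggregated gradient enters with a geometric $\mu_G$-weight that decays with its age. Next I would telescope the weight update $\boldsymbol{w}^{(r)}_G=\boldsymbol{w}^{(r-1)}_G-m^{(r)}_G$ back to the initial model $\boldsymbol{w}^{(0)}_G$, substituting the closed form for the momentum, so that $\boldsymbol{\phi}^{(r)}_G$ is written as $\boldsymbol{\phi}^{(0)}_G$ minus a $\mu_G$- and $p^{(s-1)}$-weighted sum of the historical aggregated gradients.

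The final step is to make client heterogeneity explicit. I would decompose the round-$s$ aggregated gradient into its per-client proxies, $g^{(s)}=\sum_{k=1}^{K}\xi^{(s)}_{k}$, where $\xi^{(s)}_{k}$ is the contribution uploaded by client $k$ in round $s$ and is identically zero whenever $k$ was not selected; substituting this and exchanging the order of summation collects the stated double sum $\sum_{k=1}^{K}\sum_{s}(\mu^{r-s+1}_G p^{(s-1)}\xi^{(s)}_{k})$, in which a frequently communicating client contributes more nonzero, recently-weighted terms and therefore pulls $\boldsymbol{\phi}^{(r)}_G$ more strongly toward its own direction. The main obstacle I anticipate is the summation bookkeeping: telescoping the weight update produces a nested double sum over rounds whose inner geometric collapse must be reconciled with the single-power factor $\mu^{r-s+1}_G$ in the statement, so care is needed to fix the exponent convention (and the index range, written $\mathcal{S}$ in the statement) and to track the off-by-one between the momentum age $r-s$ and the displayed $r-s+1$. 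I would also verify that zeroing out $\xi^{(s)}_{k}$ for non-participating clients is precisely the mechanism that converts participation-frequency imbalance into an accumulated directional bias of the global model, which is the qualitative conclusion the theorem is meant to support.
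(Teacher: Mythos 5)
Your overall strategy---identify $(\frac{\partial\mathcal{L}_k}{\partial\boldsymbol{h}_{i}})^{(r,e)}_{glb}$ with the broadcast classifier, unroll the global MGD recursion, then attribute each round's aggregated gradient to individual clients with $\xi^{(s)}_{k}=0$ for non-participants---matches the paper's intent, and your final client-attribution step is exactly the paper's last move. The genuine gap is in the middle step, and it is precisely the one you flagged as ``summation bookkeeping'': the paper does \emph{not} telescope the weight update back to $\boldsymbol{w}^{(0)}_G$. It takes a \emph{single} weight-update step, $\boldsymbol{\phi}^{(r+1)}_G=\boldsymbol{\phi}^{(r)}_G-\boldsymbol{\nu}^{(r+1)}_G$, and then recursively expands only that one momentum vector, $\boldsymbol{\nu}^{(r+1)}_G=p^{(r)}\xi^{(r+1)}+\mu_G\,p^{(r-1)}\xi^{(r)}+\cdots+\mu_G^{r}\,p^{(0)}\xi^{(1)}$; this is where the single-power coefficient $\mu_G^{r-s+1}p^{(s-1)}$ on $\xi^{(s)}$ comes from, with the reference model on the right-hand side remaining the previous round's global model rather than the initial one.

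If you instead telescope the trajectory all the way to $\boldsymbol{\phi}^{(0)}_G$ as you propose, the gradient $\xi^{(s)}$ appears in every subsequent momentum $m^{(s)},m^{(s+1)},\dots,m^{(r)}$, so its total coefficient is the partial geometric sum $p^{(s-1)}(1+\mu_G+\cdots+\mu_G^{r-s})=p^{(s-1)}\tfrac{1-\mu_G^{r-s+1}}{1-\mu_G}$, not $\mu_G^{r-s+1}p^{(s-1)}$. No off-by-one adjustment reconciles these: they are different decompositions, and your version is in fact the analogue of what the paper does for the \emph{local} optimizer in Theorem~3, where exactly such $\tfrac{1-\mu^{e+1-j}}{1-\mu}$ coefficients appear. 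To land on the stated formula you must stop the telescoping after one step and let the momentum alone carry the history. (The paper's own indexing is admittedly sloppy---its derivation produces $\boldsymbol{\phi}^{(r+1)}_G$ while the theorem labels the left-hand side as the round-$r$ broadcast model $\boldsymbol{\phi}^{(r,0)}_{k,c}$---but that is an off-by-one in the statement, not a license for the full-telescoping route.)
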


\begin{proof}
Based on the global MGD optimizer update formula, the parameter update process can be expressed as:
\begin{equation}
    \begin{aligned}
        \phi^{(r+1)}_G=&\phi^{(r)}_G-\nu_G^{(r+1)}\\
        =&\phi^{(r)}_G-[\mu_G\nu_G^{(r)}+p^{(r)}\xi^{(r+1)}]\\
        =&\phi^{(r)}_G-[\mu_G(\mu_G\nu_G^{(r-1)}+p^{(r-1)}\xi^{(r)})+p^{(r)}\xi^{(r+1)}]\\
        =&\quad\cdots\cdots\\
        =&\phi^{(r)}_G-[p^{(r)}\cdot \xi^{(r+1)}\\
        &+\mu_G p^{(r-1)} \cdot \xi^{(r)}+\cdots+\mu^{r}_Gp^{(0)}\cdot \xi^{(1)}]
    \end{aligned}
\end{equation}

Expressing $\xi^{(r)}$, the gradients of the feature extractor uploaded by clients during the $r$-th communication round, in terms of client identifiers, we can rewrite it as:
\begin{equation}
    \begin{aligned}
        (\frac{\partial\mathcal{L}_k}{\partial\boldsymbol{h}_{i}})^{(r,e)}_{glb}&=\boldsymbol{\phi}^{(r,0)}_{k,c}\\
        &=\phi^{(r)}_G-\Sigma_{k=1}^{K}\Sigma_{s=1}^{\mathcal{S}}(\mu^{r-s+1}_G p^{(s-1)} \cdot \xi^{(s)}_{k})
            \end{aligned}
\end{equation}

This concludes the proof.
\end{proof}

Here, $\mu^{r-s+1}_G p^{(s-1)} \cdot \xi^{(s)}_{k}$ denotes the contribution of the $k$-th client during its $s$-th participation in aggregation.

Theorem 2 reveals that the influence of the $k$-th client on the direction of global model updates is related to their frequency of participation. The global model significantly aligns with the local models of frequently communicating clients, causing $\boldsymbol{h}_{i}^{(r)}$ to be closer to $\boldsymbol{h}^{(r)}_{J_c,i}$.

It is evident that the update form in system A is highly complex, involving the local optimal directions under non-iid conditions, participation order, frequency, and sample size of clients. In contrast, in system B, since all clients have the same sample size and participate in aggregation every round, \((\frac{\partial\mathcal{L}_k}{\partial\boldsymbol{h}_{i}})^{(r,e)}_{B_{glb}}\) can be expressed as:
\begin{equation}
    \begin{aligned}
        (\frac{\partial\mathcal{L}_k}{\partial\boldsymbol{h}_{i}})^{(r,e)}_{B_{glb}}
        &=\phi^{(r)}_G-\frac{1}{K}\Sigma_{k=1}^{K}\Sigma_{j=1}^{r}(\mu^{r-j+1}_G \cdot \xi^{(j)}_{k})
            \end{aligned}
\end{equation}

where \(\sum_{j=1}^{r}(\mu^{r-j+1}_G \cdot \xi^{(j)}_{k})\) represents the total contribution of the kth client participating in r rounds of aggregation. Under the influence of global FLDF, it is clear that \((\frac{\partial\mathcal{L}_k}{\partial\boldsymbol{h}_{i}})^{(r,e)}_{A_{glb}} \neq (\frac{\partial\mathcal{L}_k}{\partial\boldsymbol{h}_{i}})^{(r,e)}_{B_{glb}}\). If such a globally drift model is broadcasted without addressing imbalance, the drift would manifest in the first batch of each round and persistently affect subsequent training processes.

\subsection{feature drift from Local FLDF}

Considering the impact of local FLDF on the update of the embedding vector \((\frac{\partial\mathcal{L}_k}{\partial\boldsymbol{h}_{i}})^{(r,e)}_{loc}\) in systems A and B, the general form is \(-\eta(\boldsymbol{\nu}^{(1)}_{k,c}+\boldsymbol{\nu}^{(2)}_{k,c}+\cdots+\boldsymbol{\nu}^{(e)}_{k,c})\). We expand this in Theorem 3.

\begin{theorem}
(Feature Drift Caused by Data Distribution) The update of embedding vector is subject to the drift imposed by the optimizer. Specifically, when the training round is $e$, the general form of the influence from the $j$-th batch ($e\geq j$) is $-\eta\boldsymbol{\zeta}^{(j)}_{k,c}\frac{1-\mu^{e+1-j}}{1-\mu}$.
\end{theorem}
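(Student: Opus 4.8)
The plan is to unroll the local momentum recursion and then reorganize the resulting double sum so that the total coefficient multiplying a single batch-$j$ gradient emerges as a geometric series. Starting from the local MGD update in Eq.~(14), I would first write the momentum vector at the $e$-th batch in closed form. Since the recursion reads $\boldsymbol{\nu}^{(e)}_{k,c}=\mu\,\boldsymbol{\nu}^{(e-1)}_{k,c}+\boldsymbol{\zeta}^{(e)}_{k,c}$ with initialization $\boldsymbol{\nu}^{(0)}_{k,c}=\mathbf{0}$, where $\boldsymbol{\zeta}^{(t)}_{k,c}$ denotes the raw gradient computed on the $t$-th local batch, repeated back-substitution yields $\boldsymbol{\nu}^{(e)}_{k,c}=\sum_{t=1}^{e}\mu^{e-t}\,\boldsymbol{\zeta}^{(t)}_{k,c}$. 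This step is routine but it is the conceptual crux: it exhibits the momentum at step $e$ as a geometrically weighted accumulation of all earlier batch gradients, so that each past batch continues to exert a decaying influence on the current feature update.

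Next I would substitute this closed form into the local drift term appearing in the general feature-update expression of Theorem~1, namely $-\eta\sum_{r=1}^{e}\boldsymbol{\nu}^{(r)}_{k,c}$, obtaining the double sum $-\eta\sum_{r=1}^{e}\sum_{t=1}^{r}\mu^{r-t}\,\boldsymbol{\zeta}^{(t)}_{k,c}$. The key manipulation is to exchange the order of summation so as to collect all terms carrying a fixed gradient $\boldsymbol{\zeta}^{(j)}_{k,c}$; since $t$ ranges from $1$ to $r$ and $r$ from $1$ to $e$, fixing $t=j$ forces $r$ to run over $j,\dots,e$. This recasts the expression as $-\eta\sum_{j=1}^{e}\boldsymbol{\zeta}^{(j)}_{k,c}\sum_{r=j}^{e}\mu^{r-j}$, which cleanly separates the per-batch contributions.

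Finally, for each fixed batch index $j$ I would evaluate the inner geometric sum via the substitution $s=r-j$, giving $\sum_{r=j}^{e}\mu^{r-j}=\sum_{s=0}^{e-j}\mu^{s}=\frac{1-\mu^{e+1-j}}{1-\mu}$, so that the isolated contribution of the $j$-th batch to the feature update is exactly $-\eta\,\boldsymbol{\zeta}^{(j)}_{k,c}\frac{1-\mu^{e+1-j}}{1-\mu}$, as claimed. I expect the main obstacle to be bookkeeping rather than anything deep: one must fix a consistent convention for the momentum initialization and the index range, and justify the summation swap carefully so that the exponent comes out as $e+1-j$ rather than $e-j$. The geometric sum has $e-j+1$ terms, and a classic off-by-one here would otherwise corrupt the stated closed form; verifying the boundary cases $j=e$ (which should give coefficient $1$) and $j=1$ is a cheap sanity check that pins down the indexing.
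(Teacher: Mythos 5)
Your proposal is correct and follows essentially the same route as the paper's proof: unroll the momentum recursion $\boldsymbol{\nu}^{(e)}_{k,c}=\sum_{t=1}^{e}\mu^{e-t}\boldsymbol{\zeta}^{(t)}_{k,c}$, collect the coefficient of each fixed $\boldsymbol{\zeta}^{(j)}_{k,c}$ across the sum of momenta, and evaluate the resulting geometric series with $e-j+1$ terms to get $\frac{1-\mu^{e+1-j}}{1-\mu}$. The only difference is presentational: you make the summation swap explicit, while the paper writes the regrouping out term by term; your boundary checks at $j=e$ and $j=1$ confirm the same indexing the paper uses.
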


\begin{proof}
Further expanding $(\frac{\partial\mathcal{L}_k}{\partial\boldsymbol{h}_{i}})^{(r,e)}_{ioc}$ using the local MGD optimizer update formula yields:
   \begin{equation}
    \begin{aligned}
        &(\frac{\partial\mathcal{L}_k}{\partial\boldsymbol{h}_{i}})^{(r,e)}_{loc}\\=&-\eta(\boldsymbol{\nu}^{(r,1)}_{k,c}+\boldsymbol{\nu}^{(r,2)}_{k,c}+\cdots+\boldsymbol{\nu}^{(r,e)}_{k,c}\\
        =&-\eta[\boldsymbol{\zeta}^{(r,1)}_{k,c}+(\boldsymbol{\zeta}^{(r,2)}_{k,c}+\mu\boldsymbol{\zeta}^{(r,1)}_{k,c})\\
        &+\cdots+(\boldsymbol{\zeta}^{(r,e)}_{k,c}+\cdots+\mu^{e-1}\boldsymbol{\zeta}^{(r,1)}_{k,c})]\\
     =&-\eta[\boldsymbol{\zeta}^{(r,1)}_{k,c}(1+\mu+\cdots+\mu^{e-1})\\
     &+\boldsymbol{\zeta}^{(r,2)}_{k,c}(1+\mu+\cdots+\mu^{e-2})+\cdots+\boldsymbol{\zeta}^{(r,e)}]\\
     =&-\eta[\boldsymbol{\zeta}^{(r,1)}_{k,c}\frac{1-\mu^e}{1-\mu}+\boldsymbol{\zeta}^{(r,2)}_{k,c}\frac{1-\mu^{e-1}}{1-\mu}+\cdots+\boldsymbol{\zeta}^{(r,e)}_{k,c}\frac{1-\mu}{1-\mu}]
    \end{aligned}
\end{equation}

Observing the form of each term, the general form of the influence from the $e$-th batch ($e\geq j$) is  $-\eta\boldsymbol{\zeta}^{(j)}_{k,c}\frac{1-\mu^{e+1-j}}{1-\mu}$.

This completes the proof.
\end{proof}

We observe that during optimizer training, the gradient update information from historical batches in each communication round significantly influences the model's extracted features. Specifically, gradients from earlier batches exert a more substantial impact on the model, aligning with the principle of diminishing marginal utility.

To examine the deviation between $(\frac{\partial\mathcal{L}_k}{\partial\boldsymbol{h}_{i}})^{(r,e)}_{A_{loc}}$ and $(\frac{\partial\mathcal{L}_k}{\partial\boldsymbol{h}_{i}})^{(r,e)}_{B_{loc}}$, Lemma 1 establishes the relationship between the gradient updates from historical batches within the current communication round and the client data distribution.

\begin{lemma}
(Impact of Historical Batch Data Distribution on Feature Drift) The number of positive and negative samples from historical batches in the current communication round affects the gradient uploaded by each client during aggregation, thereby significantly influencing $(\frac{\partial\mathcal{L}_k}{\partial\boldsymbol{h}_{i}})^{(r,e)}_{loc}$. For all \(j \in [1, e]\), the influence of the \(j\)-th batch in the current communication round is
\begin{equation}
    \begin{aligned}
        \boldsymbol{\zeta}_{k,c}^{(r,j)}=\sum_{i=1,y_i=c}^{N_k}\left(1-p_{k,i,c}\right)\boldsymbol{h}_{k,i}^{(r,j)}-\sum_{i=1,y_i\neq c}^{N_k}p_{k,i,c}\boldsymbol{h}_{k,i}^{(r,j)}
    \end{aligned}
\end{equation}
\end{lemma}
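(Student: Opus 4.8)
The plan is to recognize that $\boldsymbol{\zeta}_{k,c}^{(r,j)}$ is simply the raw (pre-momentum) gradient of the local loss with respect to the classifier weight vector $\boldsymbol{\phi}_{k,c}$, evaluated on the $j$-th batch of round $r$. This is exactly the quantity that plays the role of $g$ in the MGD recursion (14) and that was carried symbolically through the expansion of $\boldsymbol{\nu}$ in Theorem 3: unrolling $m^{(r,j)}=\mu m^{(r,j-1)}+g^{(r,j)}$ and matching it term-by-term against the geometric expansion of $\boldsymbol{\nu}^{(r,j)}$ in the proof of Theorem 3 identifies $\boldsymbol{\zeta}^{(r,j)}_{k,c}$ with the single-batch classifier gradient. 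Hence the entire lemma reduces to computing $\partial\mathcal{L}_k/\partial\boldsymbol{\phi}_{k,c}$ on batch $j$ and then rewriting it by separating positive from negative samples.

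First I would apply the chain rule $\partial\mathcal{L}_k/\partial\boldsymbol{\phi}_{k,c}=\sum_i(\partial\mathcal{L}_k/\partial z_{i,c})(\partial z_{i,c}/\partial\boldsymbol{\phi}_{k,c})$. Because $z_{i,c}=\boldsymbol{\phi}_{k,c}^{\top}\boldsymbol{h}_{k,i}$ is bilinear, $\partial z_{i,c}/\partial\boldsymbol{\phi}_{k,c}=\boldsymbol{h}_{k,i}$, which is the exact dual of equation (12), where differentiating the same logit with respect to $\boldsymbol{h}_i$ produced $\boldsymbol{\phi}_c$. The scalar factor $\partial\mathcal{L}_k/\partial z_{i,c}$ is already in hand from the Theorem 1 derivation: combining the softmax derivatives (8)--(10) and collapsing the indicator sum exactly as in (11) gives $\partial\mathcal{L}_k/\partial z_{i,c}=\sum_{i=1}^{N_k}(p_{k,i,c}-\mathcal{I}\{y_i=c\})$. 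Multiplying the per-sample factor by $\boldsymbol{h}_{k,i}^{(r,j)}$ inside the summand therefore yields $\boldsymbol{\zeta}^{(r,j)}_{k,c}=\sum_{i=1}^{N_k}(\mathcal{I}\{y_i=c\}-p_{k,i,c})\boldsymbol{h}_{k,i}^{(r,j)}$.

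The final step is pure bookkeeping: split the single sum by the value of the indicator. For positive samples ($y_i=c$) the coefficient is $1-p_{k,i,c}$; for negative samples ($y_i\neq c$) it is $-p_{k,i,c}$, and factoring out the minus sign on the negative block produces the two explicitly-signed sums $\sum_{i,\,y_i=c}(1-p_{k,i,c})\boldsymbol{h}_{k,i}^{(r,j)}-\sum_{i,\,y_i\neq c}p_{k,i,c}\boldsymbol{h}_{k,i}^{(r,j)}$ claimed in the statement. Since the softmax derivative and the chain-rule identity hold batch-by-batch, with the batch index $j$ entering only through the point at which $\boldsymbol{h}_{k,i}^{(r,j)}$ and $p_{k,i,c}$ are evaluated, the identity holds uniformly for every $j\in[1,e]$, which is the ``for all $j$'' content of the lemma.

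The main obstacle I anticipate is not analysis but convention control. One must pin down precisely which object $\boldsymbol{\zeta}$ denotes --- the classifier-weight gradient, not the feature gradient of Theorem 1 --- and then track the descent-direction sign carefully, since the target coefficients are $+(1-p_{k,i,c})$ on positives and $-p_{k,i,c}$ on negatives, i.e. the negative of the loss gradient $\partial\mathcal{L}_k/\partial\boldsymbol{\phi}_{k,c}$; reconciling this with the sign carried through equations (14)--(15) and Theorem 3 is the one place where a slip would flip the whole expression. A secondary point worth stating explicitly is that $p_{k,i,c}$ is the softmax probability computed with the batch-$j$ classifier, so the per-batch counts of positive versus negative samples are what make $\boldsymbol{\zeta}^{(r,j)}_{k,c}$ sensitive to the client's local label distribution; making that dependence visible is exactly what links the lemma back to the $A_{loc}$ versus $B_{loc}$ comparison that motivates it.
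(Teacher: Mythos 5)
Your proposal matches the paper's proof essentially line for line: both apply the chain rule to $\frac{\partial\mathcal{L}}{\partial\boldsymbol{\phi}_{k,c}}$, use $\frac{\partial z_{i,c}}{\partial\boldsymbol{\phi}_{k,c}}=\boldsymbol{h}_{k,i}$ together with the softmax/cross-entropy derivatives already established in the proof of Theorem 1, and then split the resulting sum according to the value of the indicator into positive- and negative-sample blocks. Your explicit attention to the sign convention is in fact more careful than the paper itself, whose proof writes $\boldsymbol{\zeta}_{k,c}^{(r,j)}=\frac{\partial\mathcal{L}}{\partial\boldsymbol{\phi}_{k,c}}=-\sum_{i}(\mathcal{I}\{y_i=c\}-p_{i,c})\boldsymbol{h}_{k,i}^{(r,j)}$ and then silently drops the leading minus sign when passing to the final split form, whereas you correctly identify $\boldsymbol{\zeta}$ as the negative loss gradient to make the stated coefficients come out right.
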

\begin{proof}
We derive the form of gradients uploaded by each client in each communication round as shown in equation (12). For generality, the gradient uploaded by the $k$-th client in the $j$-th communication round is:
\begin{equation}
    \begin{aligned}
        \boldsymbol{\zeta}_{k,c}^{(r,j)}&=\frac{\partial\mathcal{L}}{\partial\boldsymbol{\phi}_{k,c}}\\
        &=\frac{\partial\mathcal{L}}{\partial p_c}\cdot \frac{\partial p_c}{\partial z_c}\cdot \frac{\partial z_c}{\partial \phi_{(c,i)}}\\
        &=-\sum_{i=1}^{N_k}\left(\mathcal{I}\left\{y_i=c\right\}-p_{i,c}\right)\boldsymbol{h}_{k,i}^{(r,j)}\\
        &=\sum_{i=1,y_i=c}^{N_k}\left(1-p_{k,i,c}\right)\boldsymbol{h}_{k,i}^{(r,j)}-\sum_{i=1,y_i\neq c}^{N_k}p_{k,i,c}\boldsymbol{h}_{k,i}^{(r,j)}
    \end{aligned}
\end{equation}

This concludes the proof.
\end{proof}

Observing the form of Eq. 12, we find that the magnitude of the training sample gradient \(g\) from the \(j\)-th batch is influenced by the proportion of positive and negative samples in that batch. For a sample \(i\) belonging to class \(c\), if \(c\) is a majority class (i.e., \(c \in J_c\)), the first term contains more items relative to a balanced dataset, which enhances the model's recognition accuracy for this class but may lead to overfitting. Meanwhile, due to the smaller number of negative samples for the majority class, the second term in Eq. 12 has fewer items compared to a balanced dataset, making the model's distinction between this class and others less clear. Conversely, although samples from the minority class \(R_c\) help the model better distinguish between classes, the limited number of positive samples prevents high recognition accuracy.

In System A, where clients have non-IID and imbalanced data distributions, each sampling is more likely to include the majority class within the client, while the minority class has fewer sampling opportunities. This means that during local training on each client, the expected gradient update direction depends on the majority and minority classes of that client. Consequently, the impact related to the local FLDF varies across clients in System A. In contrast, System B has uniformly distributed data across clients, leading to \(\left(\frac{\partial \mathcal{L}}{\partial \boldsymbol{h}_{k,c}}\right)^{(e)}_{A_{loc}} \neq \left(\frac{\partial \mathcal{L}}{\partial \boldsymbol{h}_{k,c}}\right)^{(e)}_{B_{loc}}\).

\subsection{Decomposition of embedding vectors}
In summary, each term in \((\frac{\partial\mathcal{L}_k}{\partial\boldsymbol{h}_{i}})^{(r,e)}_{A}\) and \((\frac{\partial\mathcal{L}_k}{\partial\boldsymbol{h}_{i}})^{(r,e)}_{B}\) differs, indicating the existence of feature drift. Moreover, during the process of updating the embedding vector using the MGD optimizer, the drift gradually increases.

Invariant features, denoted as \(h_{inv}\), refer to category features that remain stable across different client data distributions. These features are unaffected by the model training process and are orthogonal to feature components specific to the training process \cite{ref1}. 

Let $\boldsymbol{\lambda}^{(r)}_{G}=-\sum_{i=1}^N\left(\mathcal{I}\left\{y_i=c\right\}-p_{i,c}\right)\boldsymbol{\phi}^{(0)}_i$ and $\boldsymbol{\lambda}^{(r,e)}_{k}=\sum_{i=1}^N\left(\mathcal{I}\left\{y_i=c\right\}-p_{i,c}\right)\eta(\boldsymbol{\nu}^{(1)}_i+\boldsymbol{\nu}^{(2)}_i+\cdots+\boldsymbol{\nu}^{(r,e)}_i)$, where $r$ denotes the current communication round and $e$ represents the current training epoch. Define $\hat{\boldsymbol{d}^{(r)}_G}=\boldsymbol{\lambda}^{(r)}_{G}/\left\|\boldsymbol{\lambda}^{(r)}_{G}\right\|$ and $\hat{\boldsymbol{d}^{(r,e)}_k}=\boldsymbol{\lambda}^{(r,e)}_{k}/\left\|\boldsymbol{\lambda}^{(r,e)}_{k}\right\|$. 

According to our previous derivation, we have obtained the update form of the embedding vector. In system A with FLDF, it is evident that the embedding vector not only includes the invariant feature \(h_{inv}\) but also contains projections on the drift directions \(\hat{\boldsymbol{d}^{(r,e)}_k}\) and \(\hat{\boldsymbol{d}^{(r,e)}_k}\). Consequently, based on our earlier analysis, these drift projections can be approximated using parameters and momentum, thereby distinguishing them from the invariant components of the embedding vector, as stated in Proposition 1.

\begin{proposition}
(Invariant Feature Decomposition in FL)
Any embedding vector $\boldsymbol{h}$ can be decomposed into $\boldsymbol{h}=\boldsymbol{h}_{inv}+\boldsymbol{d}^{(r)}_G+\boldsymbol{d}^{(r,e)}_k$, where $\boldsymbol{h}_{inv}$ is orthogonal to the subspace spanned by $\boldsymbol{d}^{(r)}_G$ and $\boldsymbol{d}^{(r,e)}_k$. The components $\boldsymbol{h}_{inv}$, $\boldsymbol{d}^{(r)}_G$, and $\boldsymbol{d}^{(r,e)}_k$ correspond to the invariant part, the global optimizer-related part, and the local optimizer-related part, respectively, with $\boldsymbol{d}^{(r)}_G=\hat{\boldsymbol{d}^{(r)}_G}cos(\boldsymbol{h},\hat{\boldsymbol{d}^{(r)}_G})\|\boldsymbol{h}\|$ and $\boldsymbol{d}^{(r,e)}_k=\hat{\boldsymbol{d}^{(r,e)}_k}cos(\boldsymbol{h},\hat{\boldsymbol{d}^{(r,e)}_k})\|\boldsymbol{h}\|$.
\end{proposition}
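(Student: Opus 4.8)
The plan is to treat the statement as an orthogonal decomposition in the feature space $\mathcal{H}$, regarded as a finite-dimensional inner-product space. By the definitions preceding the proposition, the two drift vectors $\hat{\boldsymbol{d}^{(r)}_G}$ and $\hat{\boldsymbol{d}^{(r,e)}_k}$ are obtained by normalizing $\boldsymbol{\lambda}^{(r)}_G$ and $\boldsymbol{\lambda}^{(r,e)}_k$ by their own norms, hence are unit vectors, each spanning a one-dimensional subspace. Let $\mathcal{D}=\mathrm{span}\{\hat{\boldsymbol{d}^{(r)}_G},\hat{\boldsymbol{d}^{(r,e)}_k}\}$, of dimension at most two. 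The first step is to observe that the scalar $\cos(\boldsymbol{h},\hat{\boldsymbol{d}^{(r)}_G})\|\boldsymbol{h}\|$ in the stated formula equals the inner product $\langle\boldsymbol{h},\hat{\boldsymbol{d}^{(r)}_G}\rangle$, since the direction has unit norm; hence $\boldsymbol{d}^{(r)}_G=\langle\boldsymbol{h},\hat{\boldsymbol{d}^{(r)}_G}\rangle\,\hat{\boldsymbol{d}^{(r)}_G}$ is exactly the orthogonal projection of $\boldsymbol{h}$ onto the global drift direction, and likewise $\boldsymbol{d}^{(r,e)}_k$ onto the local one.

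I would then define $\boldsymbol{h}_{inv}:=\boldsymbol{h}-\boldsymbol{d}^{(r)}_G-\boldsymbol{d}^{(r,e)}_k$, so the additive identity $\boldsymbol{h}=\boldsymbol{h}_{inv}+\boldsymbol{d}^{(r)}_G+\boldsymbol{d}^{(r,e)}_k$ holds by construction. The substantive content is the orthogonality of $\boldsymbol{h}_{inv}$ to $\mathcal{D}$, which reduces to verifying $\langle\boldsymbol{h}_{inv},\hat{\boldsymbol{d}^{(r)}_G}\rangle=0$ and $\langle\boldsymbol{h}_{inv},\hat{\boldsymbol{d}^{(r,e)}_k}\rangle=0$. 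Expanding the first and using $\|\hat{\boldsymbol{d}^{(r)}_G}\|=1$ leaves the single residual term $-\langle\boldsymbol{h},\hat{\boldsymbol{d}^{(r,e)}_k}\rangle\langle\hat{\boldsymbol{d}^{(r,e)}_k},\hat{\boldsymbol{d}^{(r)}_G}\rangle$, and symmetrically for the second.

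The main obstacle is precisely this cross-term: the sum of the two one-dimensional projections is orthogonal to $\mathcal{D}$ only when the drift directions are themselves orthogonal, i.e. $\langle\hat{\boldsymbol{d}^{(r)}_G},\hat{\boldsymbol{d}^{(r,e)}_k}\rangle=0$. I see two ways to close this. The first, matching the stated closed-form formulas, is to invoke the preceding analysis: the global term $\boldsymbol{\lambda}^{(r)}_G$ is built from the broadcast model $\boldsymbol{\phi}^{(r,0)}$ and the participation-frequency structure of Theorem 2, while the local term $\boldsymbol{\lambda}^{(r,e)}_k$ is built from the within-round momentum accumulation of Theorem 3 and Lemma 1; since these arise from structurally disjoint sources, I would take them to be orthogonal (stating this as the operative modeling hypothesis, consistent with the earlier claim that invariant features are orthogonal to training-specific components), whence the cross-term vanishes and the stated formulas yield an exact orthogonal decomposition. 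The second, more robust route dispenses with any orthogonality assumption: replace the naive projections by a Gram--Schmidt orthogonalization of $\{\hat{\boldsymbol{d}^{(r)}_G},\hat{\boldsymbol{d}^{(r,e)}_k}\}$ and let $\boldsymbol{d}^{(r)}_G+\boldsymbol{d}^{(r,e)}_k$ be the genuine orthogonal projection of $\boldsymbol{h}$ onto $\mathcal{D}$; the complementary component $\boldsymbol{h}_{inv}$ is then orthogonal to $\mathcal{D}$ automatically by the projection theorem, and both existence and uniqueness of the decomposition follow at once. I would present the first route in the body and remark that the second recovers the general case.
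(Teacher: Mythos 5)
Your proposal is mathematically sound, but it takes a genuinely different route from the paper for the simple reason that the paper offers no proof of Proposition~1 at all: the decomposition is asserted as a modeling posit, with the orthogonality of $\boldsymbol{h}_{inv}$ to the training-specific components inherited as an assumption from the cited work on invariant features, and the surrounding text only interprets the three components. What you add is the correct formalization --- recognizing that $\cos(\boldsymbol{h},\hat{\boldsymbol{d}^{(r)}_G})\|\boldsymbol{h}\| = \langle\boldsymbol{h},\hat{\boldsymbol{d}^{(r)}_G}\rangle$ so that each stated $\boldsymbol{d}$-term is the rank-one orthogonal projection of $\boldsymbol{h}$ onto a unit drift direction, and defining $\boldsymbol{h}_{inv}$ as the residual --- together with the one observation the paper glosses over: the residual is orthogonal to $\mathrm{span}\{\hat{\boldsymbol{d}^{(r)}_G},\hat{\boldsymbol{d}^{(r,e)}_k}\}$ only if the cross-term $\langle\boldsymbol{h},\hat{\boldsymbol{d}^{(r,e)}_k}\rangle\langle\hat{\boldsymbol{d}^{(r,e)}_k},\hat{\boldsymbol{d}^{(r)}_G}\rangle$ vanishes, i.e.\ only if the two drift directions are themselves orthogonal. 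This does not follow from the definitions of $\boldsymbol{\lambda}^{(r)}_G$ and $\boldsymbol{\lambda}^{(r,e)}_k$ --- the local momentum accumulations $\boldsymbol{\nu}$ are built from gradients that depend on the same $\boldsymbol{\phi}$ appearing in the global term, so ``structurally disjoint sources'' is a heuristic, not a derivation --- and you are right to flag it as the operative hypothesis rather than a consequence. Your Gram--Schmidt alternative is the rigorous fix (the projection theorem then gives existence, uniqueness, and orthogonality for free), at the cost of changing the closed-form expressions for $\boldsymbol{d}^{(r)}_G$ and $\boldsymbol{d}^{(r,e)}_k$ that the paper relies on in Eq.~(28). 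In short: your argument is more careful than the paper's (nonexistent) one, and it correctly exposes that the proposition as literally stated needs either an added orthogonality assumption on the two drift directions or a reformulation of the projection formulas.
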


Proposition 1 provides a theoretical foundation for characterizing the causal relationship between embedding vectors and classification results. In Proposition 1, we orthogonally decompose the embedding vector into invariant components, components related to the global optimizer, and components related to the local optimizer. The invariant component is the decisive factor affecting the classification result, while the components related to the global optimizer and the local optimizer are negative factors. The orthogonality between \(\boldsymbol{h}_{inv}\) and \(\boldsymbol{d}^{(r)}_G + \boldsymbol{d}^{(e)}_k\) reflects their independence.
\begin{equation}
    \begin{aligned}
        \boldsymbol{h}=\boldsymbol{h}_{inv}+\boldsymbol{d}^{(r)}_G+\boldsymbol{d}^{(e)}_k,\quad \boldsymbol{h}_{inv}\perp (\boldsymbol{d}^{(r)}_G+\boldsymbol{d}^{(e)}_k)
    \end{aligned}
\end{equation}

Observing the forms of \(\hat{\boldsymbol{d}^{(r)}_G}\) and \(\hat{\boldsymbol{d}^{(e)}_k}\), we find that many coefficient terms are canceled out. This implies that the samples from the current batch \(\left(\frac{\partial \mathcal{L}}{\partial \boldsymbol{h}_{k,c}}\right)^{(e)}_{qtt}\) have no direct influence on the direction of drift.

\section{CAFE: a Causal perspective on drift-Aware Federated lEarning}
\label{sec.5}
To systematically investigate and mitigate the impact of feature drift on classification predictions, this section examines the causal relationships between the invariant features $H_{inv}$ of samples, the sample features $H$, and the classification outcomes $Y$. Building upon the analysis in Section 4, we construct a causal graph for the training and inference phasees in federated learning and use this graph to correct classification predictions.
\begin{figure}
    \centering
    \includegraphics[width=1\linewidth]{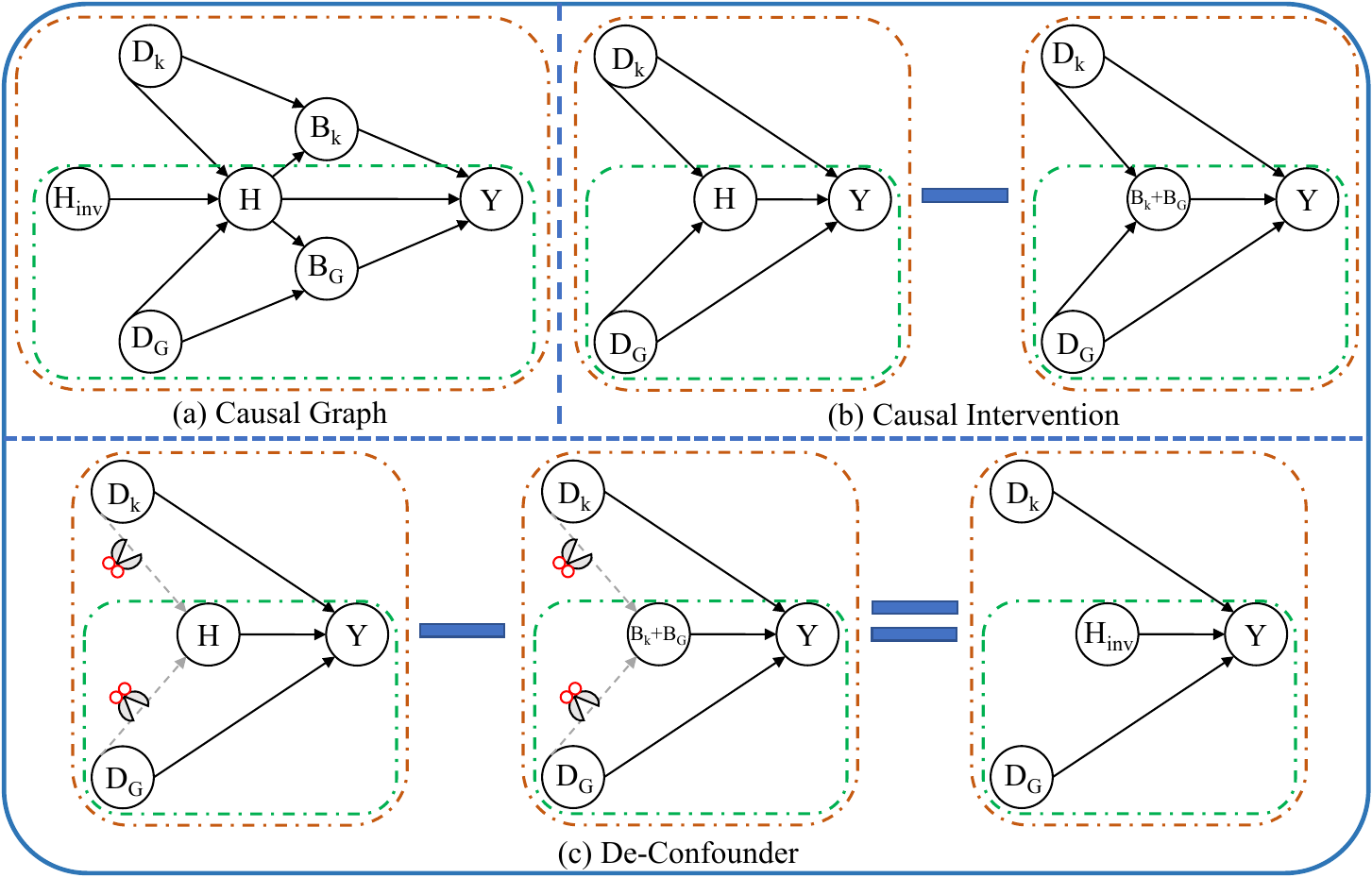}
    \caption{The causal graph of FL (a) and the process of implementing causal interventions in (b) and (c).}
    \label{fig:enter-label}
\end{figure}

\subsection{A causal perspective on FL}
Based on the assertion that feature drift is influenced by both global and local optimizers, we construct a causal graph with seven variables as shown in Figure 1(a): $F_k$ and $F_G$ represent the local and global optimizers, respectively; $H$ denotes the features extracted by the model from samples; $H_{inv}$ signifies the invariant components within the embedding vectors; $D_k$ and $D_G$ represent the global and local components embedded within the embedding vectors, which correspond to the drifts inherent in the feature representations; and $Y$ is the model's prediction. The causal graph, a directed acyclic graph (DAG), illustrates how these variables interact through causal relationships. Specifically, it reveals the following causal relationships, as shown in Figure 2(a):

\textbf{$(H_{inv},F_G,F_k)\rightarrow H$}. This indicates that the feature vector $H$ is influenced by the invariant component $H_{inv}$, as well as the global FLDF $F_G$ and the local FLDF $F_k$, as established by Theorem 1.

\textbf{$(F_G,H)\rightarrow D_G$} and \textbf{$(F_k,H)\rightarrow D_k$}. During model iteration, the global optimizer $F_G$ and the local optimizer $F_k$ lead to drift components $D_G$ and $D_k$ in the feature vector, with their magnitude determined by the magnitude of $H$, as shown in Proposition 1.

\textbf{$M\rightarrow D_k\rightarrow Y$} and \textbf{$M\rightarrow D_G\rightarrow Y$}. Both the global optimizer $F_G$ and the local optimizer $F_k$ result in global drift components $D_G$ and local drift components $D_k$, which subsequently affect the model's judgment of the label $Y$ for sample $i$:
\begin{equation}
    \begin{aligned}
        \boldsymbol{z}_{i,c}=\boldsymbol{\phi}_{c}^\top\boldsymbol{h}_{i}=\boldsymbol{\phi}_{c}^\top(\boldsymbol{h}_{inv_i}+\boldsymbol{d}^{(r)}_G+\boldsymbol{d}^{(r,e)}_k)
    \end{aligned}
\end{equation}

\textbf{$H\rightarrow (D_G,D_k)\rightarrow Y$} and \textbf{$H\rightarrow Y$}.  The feature vector $H$ contains both the invariant component $H_{inv}$ and the drift components $(D_G, D_k)$, leading to potential misclassification of $Y$.

$P(Y=i|H=\boldsymbol{h}, F_k=\boldsymbol{f}_k, F_G=\boldsymbol{f}_G)$ signifies that during federated training, the neural network calculates logits and normalizes them to obtain softmax probabilities based on the feature vector under the influence of global and local optimizers. We further explore this causal relationship from sample features to prediction results.

\subsection{Causal Intervention}
Observing the causal graph, we identify $F_k$ and $F_G$ as confounders of the $H\rightarrow Y$ relationship. A confounder is a variable that influences both the independent and dependent variables, creating a spurious statistical correlation between $H$ and $Y$. Meanwhile, $D_k$ and $D_G$ act as intermediaries, through which the independent variable exerts an indirect effect on the dependent variable, altering the total effect of $H\rightarrow Y$. Based on Figure 2(a), the goal of drift-aware classification is to isolate the direct causal effect along $H_{inv} \rightarrow Y$ and replace $p_{i,c}$ in equation (5).

By integrating the causal effects of confounding factors $F_G$ and $F_k$ through weighting, we eliminate their impact and obtain the total average causal effect of the feature vector $H$ on the classification result $Y$ using the do operator. However, we also need the direct causal effect of the invariant component $H_{inv}$ on $Y$. Given that $H$ contains both invariant and optimizer-related components, we subtract the indirect average causal effect caused by $D_k$ and $D_G$ from the total average causal effect, as illustrated in Figure 2(b). This yields the drift-aware classification result $\hat p_{i,c}^{train}$:
\begin{equation}
\begin{aligned}
    &\hat p_{i,c}^{train}\\ &= P(Y=i|do(H=\boldsymbol{h}), F_k=\boldsymbol{f}_k, F_G=\boldsymbol{f}_G) \\
&\quad- P(Y=i|do(H=\boldsymbol{d}_k+\boldsymbol{d}_G), F_k=\boldsymbol{f}_k, F_G=\boldsymbol{f}_G)
\end{aligned}
\end{equation}

Where $P\Big(Y_i=c|do(H=\boldsymbol{h}_i)\Big)$ signifies the distribution of $Y$ if everyone in the population were to fix $H$ at $\boldsymbol{h}_i$. It is important to note that, due to the nature of federated learning, models are trained locally at each client and use the global model directly during inference, no longer subject to local client influences. Thus, as depicted in the green dashed box in Figure 1, the mediation and confounding effects need not be considered during the inference phase:
\begin{equation}
    \begin{aligned}
    \hat p_{i,c}^{infer}&=P\Big(Y_i=c|do(H=\boldsymbol{h}_i), F_G=\boldsymbol{f}_G\Big)\\
    &\quad-P\Big(Y_i=c|do(H=\boldsymbol{d}^{(r)}_G), F_G=\boldsymbol{f}_G\Big)
    \end{aligned}
\end{equation}

Based on the probability $p_{i,c}$, the classification result can be obtained:
\begin{equation}
    \begin{aligned}
        y_i=\arg\max_{c\in C} p_{i,c}
    \end{aligned}
\end{equation}

\begin{figure*}
    \centering
    \includegraphics[width=1\linewidth]{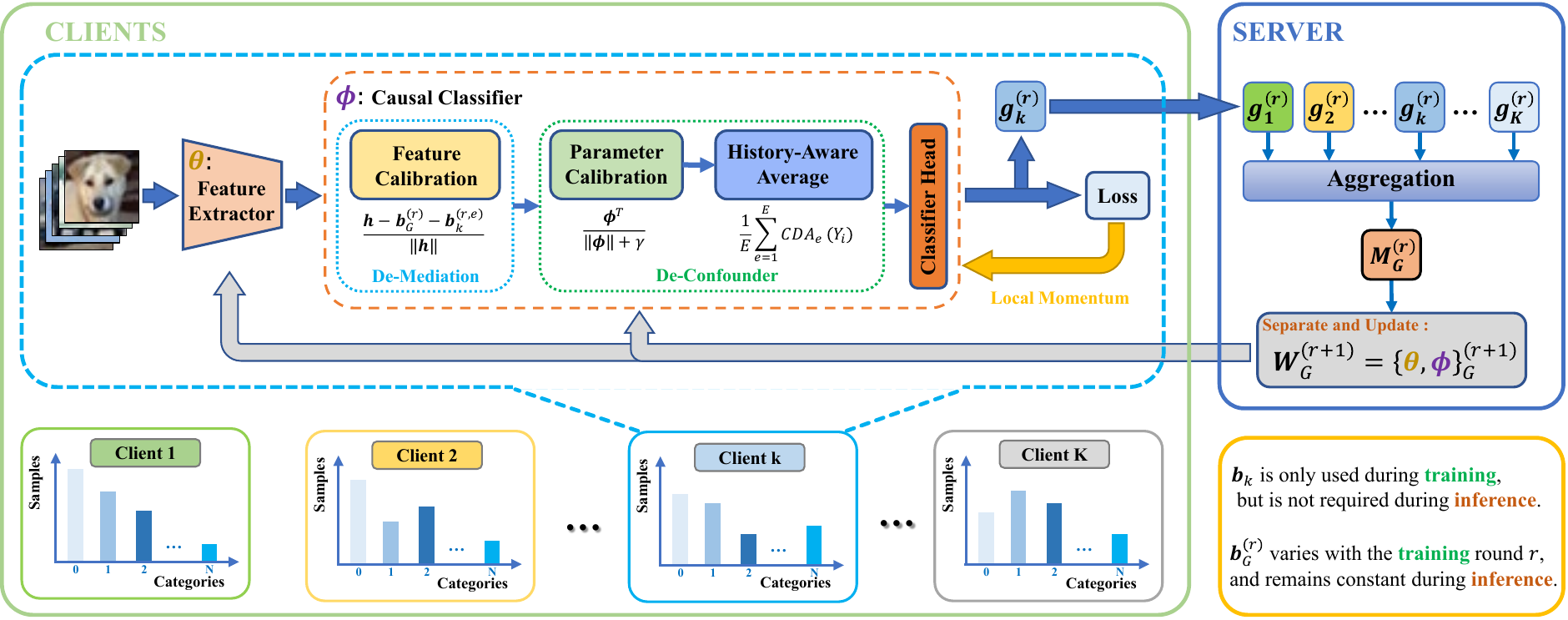}
    \caption{The detailed process of the CAFE algorithm during the training phase involves calibrating the embedding vectors and classifier parameters. Subsequently, historical information is integrated to achieve robust and drift-aware classification results for complex scenarios.}
    \label{fig:enter-label}
\end{figure*}

\subsection{De-Confounder Intervention}

We further analyze the two backdoor paths influenced by confounders $F_k$ and $F_G$ in the causal graph. From the perspective of causal inference, Mk and MG are essentially considered confounders that mislead the model to learn classification correlations specific to the federated learning training process in the training data, leading to prediction drifts for new samples in the inference stage.

According to the causal graph in Figure 2(a), the output probability $P(Y|H)$ is affected by the backdoor effect, which can be decomposed as follows using Bayes' rule:

\begin{equation}
    \begin{aligned}
&P\Big(Y_i=c|H=\boldsymbol{h}_i, F_k=\boldsymbol{f}_k, F_G=\boldsymbol{f}_G\Big)\\
&=\sum_{f_G,f_k}P(Y_i=c|H=\boldsymbol{h}_i,F_G=\boldsymbol{f}_G,F_k=\boldsymbol{f}_k)\\
&\qquad P(F_G=\boldsymbol{f}_G,F_k=\boldsymbol{f}_k|H=\boldsymbol{h}_i)
\end{aligned}
\end{equation}

Confounding factors Mk and MG introduce observational drifts via $P(F_G=\boldsymbol{f}_G,F_k=\boldsymbol{f}_k|H=\boldsymbol{h}_i)$. The $do(·)$ operator serves as an effective approximation for empirical intervention. Backdoor adjustment implies measuring the causal effects at each level of the confounders and then integrating them according to the output probabilities from different training stages to estimate the causal effect. As shown in Figure 2(b), the influences from $F_k$ and $F_G$ to H are cut off. The formula with intervention for Eq. (23) can be written into a do-free expression:
\begin{equation}
\label{eq18}
    \begin{aligned}
&P\Big(Y_i=c|do(H=\boldsymbol{h}_i), F_k=\boldsymbol{f}_k, F_G=\boldsymbol{f}_G\Big)\\
&=\sum_{f_G,f_k}P(Y_i=c|H=\boldsymbol{h}_i,F_G=\boldsymbol{f}_G,F_k=\boldsymbol{f}_k)\\
&\quad\cdot P(F_G=\boldsymbol{f}_G)P(F_k=\boldsymbol{f}_k)
\end{aligned}
\end{equation}

This equation is derived from the adjustment formula for $H$. It is evident that once we have the conditional probabilities mentioned, we can predict the effect of intervention according to this equation.

Equation of momentum indicate that, given fixed hyperparameters $\mu$ and $\eta$, each sample $F_G=\boldsymbol{f}_G$ and $F_k=\boldsymbol{f}_k$ is a function of model initialization and mini-batch sampling strategy, implying an infinite number of samples for $F_G$ and $F_k$. Fortunately, the inverse probability weighting formula in Eq.\ref{eq18} provides a method for approximation. For a finite dataset, we can only observe one $\boldsymbol{f}$ influencing the training within a batch, with the number of $\boldsymbol{f}_{(k)}$ equal to the number of trained batches. This is apparent in the federated learning training phase: after receiving the aggregated model from the server, the initial model of the first batch in each round is influenced by the global optimizer, and subsequent training is affected by the local optimizer.

Furthermore, for each client, the initialization model for each round is sent by the server, and each training phase starts anew with a different initialization model, unaffected by training phasees in other rounds. During training, $f_G$ remains constant, i.e., $P(F_G=\boldsymbol{f}_G)=1$, while $f_k$ is updated in each batch. Assuming $E$ batches have participated in training within this round, the equation can be simplified as follows:
\begin{equation}
    \begin{aligned}
&P\Big(Y_i=c|do(H=\boldsymbol{h}_i), F_k=\boldsymbol{f}_k, F_G=\boldsymbol{f}_G\Big)\\
&=\frac{1}{E}\sum_{E}P(Y_i=c|H=\boldsymbol{h}_i,F_G=\boldsymbol{f}^{(r)}_G,F_k=\boldsymbol{f}^{(r,e)}_k)\\
&=\frac{1}{E}\sum_{E}P_e(Y_i)
\end{aligned}
\end{equation}

Consider the probability $P(Y_i=c|H=\boldsymbol{h}_i,F_G=\boldsymbol{f}_G,F_k=\boldsymbol{f}_k)$, which represents the likelihood that the classifier identifies the feature extractor's output $\boldsymbol{h}_i$ as class $i$ under the influence of specific $f_G$ and $f_k$. 

The cosine classifier measures the relationship between feature vectors and class weights using cosine similarity, which can make intra-class samples more compact and inter-class samples more separated. Moreover, it avoids the exponential operations in softmax, thus providing more numerical stability. Additionally, the classification effect can be further optimized by tuning hyperparameters. In Eq. (14), we model $p_{i,c}^{train}$ as the output probability of the cosine classifier, where $\boldsymbol{h}^{(e)}$ and $\boldsymbol{w}_i^{(e)}$ represent the feature and parameter values influenced by $f_G$ in the $e$-th batch.
\begin{equation}
    \begin{aligned}
    p_{i,c}^{train}
    &\propto \tau\frac{f(\boldsymbol{h}_i;\boldsymbol{w}^{(r,e)})}{g(\boldsymbol{h}_i;\boldsymbol{w}^{(r,e)})}
    \end{aligned}
\end{equation}

Here, $\tau$ is a positive proportionality constant. Recall Assumption 1: $\boldsymbol{h}_i=\boldsymbol{h}_{inv_i}+\boldsymbol{d}^{(r)}_G+\boldsymbol{d}^{(r,e)}_k$. The numerator represents the original logits of the neural network: 
$f(\boldsymbol{h}_i;\boldsymbol{w}^{(r,e)}) = (\boldsymbol{w}^{(r,e)})^\top\boldsymbol{h}_i=(\boldsymbol{w}^{(r,e)})^\top(\boldsymbol{h}_{inv_i}+\boldsymbol{d}^{(r)}_G+\boldsymbol{d}^{(r,e)}_k)$.
The denominator is the propensity score, used to normalize each effect. Based on previous analysis, the model's parameters may also shift due to feature drift. To normalize this, $g(\boldsymbol{h}_i;\boldsymbol{w}^{(r,e)})$ is set to 
$g(\boldsymbol{h}_i;\boldsymbol{w}^{(r,e)})=\|\boldsymbol{h}_i\|\cdot\|\boldsymbol{w}^{(r,e)}\|+\gamma\|\boldsymbol{h}_i\|$.
By introducing the penalty term $\gamma$, the magnitude of$\|\boldsymbol{w}^{(r,e)}\|$ is adjusted, normalizing each component of $\boldsymbol{h}_i$.

In summary, the logit calculation for $P(Y = i|do(H = \boldsymbol{h}_i))$ in the $k$-th client can be expressed as:
\begin{equation}
\begin{aligned}
        &P\Big(Y_i=c|do(H=\boldsymbol{h}_i), F_k=\boldsymbol{f}_k, F_G=\boldsymbol{f}_G\Big)\\&=\frac\tau {E}\sum_{{e}=1}^{E}\frac{(\boldsymbol{\phi}_c^{(r,e)})^\top\boldsymbol{h}_{inv_i}}{(\|\boldsymbol{\phi}_c^{(r,e)}\|+\gamma)\|\boldsymbol{h}_i\|}\\
    &=\frac\tau {E}\sum_{{e}=1}^{E}\frac{(\boldsymbol{\phi}_c^{(r,e)})^\top(\boldsymbol{h}_{i}-\boldsymbol{d}_{G}^{(r)}-\boldsymbol{d}_k^{(r,e)})}{(\|\boldsymbol{\phi}_c^{(r,e)}\|+\gamma)\|\boldsymbol{h}_i\|}
\end{aligned}
\end{equation}

Similarly, we have
\begin{equation}
\begin{aligned}
        &P\Big(Y=i|do(H=\boldsymbol{d}_k+\boldsymbol{d}_G), F_k=\boldsymbol{f}_k, F_G=\boldsymbol{f}_G\Big)&\\
        &=\frac\tau {E}\sum_{{e}=1}^{E}\frac{(\boldsymbol{\phi}_i^{(e)})^\top(\boldsymbol{d}_k^{(e)}+\boldsymbol{d}_G^{(e)})}{(\|\boldsymbol{\phi}_i^{(e)}\|+\gamma)\|\boldsymbol{h}^{(e)}\|}\\
    &=\frac\tau {E}\sum_{{e}=1}^{E}\frac{(\boldsymbol{\phi}_i^{(e)})^\top(\hat{\boldsymbol{d}^{(r)}_G}cos(\boldsymbol{h},\hat{\boldsymbol{d}^{(r)}_G})\|\boldsymbol{h}\|+\hat{\boldsymbol{d}^{(e)}_k}cos(\boldsymbol{h},\hat{\boldsymbol{d}^{(e)}_k})\|\boldsymbol{h}\|)}{(\|\boldsymbol{\phi}_i^{(e)}\|+\gamma)\|\boldsymbol{h}^{(e)}\|}\\
    &=\frac\tau {E}\sum_{{e}=1}^{E}\frac{(\boldsymbol{\phi}_i^{(e)})^\top(\hat{\boldsymbol{d}^{(r)}_G}cos(\boldsymbol{h},\hat{\boldsymbol{d}^{(r)}_G})+\hat{\boldsymbol{d}^{(e)}_k}cos(\boldsymbol{h},\hat{\boldsymbol{d}^{(e)}_k}))}{\|\boldsymbol{\phi}_i^{(e)}\|+\gamma}
\end{aligned}
\end{equation}

\subsection{Proposed CAFE Local Training Framework}
By substituting $\boldsymbol{h}_inv$ from Equation (20) into $\boldsymbol{h}$ in Equation (27), we eliminate the indirect effects within the formulation. The detailed algorithmic description of CAFE is presented in Alg. 1. In summary, during the training phase, to obtain an unbiased estimate of \( Y \), we replace \( p_{i,c} \) in eq.(5) with the $p_{i,c}^{train}$:
\begin{subequations}
\begin{align}
    \hat p_{i,c}^{train}
    &=\frac{\tau} {E}\sum_{{e}=1}^{E}\left(\frac{(\boldsymbol{w}^{(r,e)})^\top\boldsymbol{h}_i}{(\|\boldsymbol{w}^{(r,e)}\|+\gamma)\|\boldsymbol{h}_i\|}\right.\nonumber\\
    &\left.\quad-\alpha\cdot\frac{cos(\boldsymbol{h}_i,\boldsymbol{\hat{d}_{G}}^{(r)})\cdot(\boldsymbol{w}^{(r,e)})^\top\boldsymbol{\hat{d}}_{G}}{\|\boldsymbol{w}^{(r,e)}\|+\gamma}\right.\nonumber\\
    &\left.\quad-\beta\cdot\frac{cos(\boldsymbol{h}_i,\boldsymbol{\hat{d}}_{k})\cdot(\boldsymbol{w}^{(r,e)})^\top\boldsymbol{\hat{d}}_{k}}{\|\boldsymbol{w}^{(r,e)}\|+\gamma}\right)
    \end{align}
\end{subequations}

where \( \alpha \) and \( \beta \) are linear balancing parameters between direct and indirect effects. \( p_{i,c} \) represents the corrected probability that sample \( i \) belongs to class \( c \), which significantly differs from the expression in eq.(5). Eq.(22b) substitutes $\boldsymbol{d}$ into the second and third terms and cancels out $\boldsymbol{h}_i$ in both the numerator and the denominator.

\begin{algorithm}
\caption{CAFE (Proposed Framework in training phase)}\label{alg:alg1}
\begin{algorithmic}
\STATE 
\STATE {\textbf{Input: }}Learning rate $\eta$, local batch $E$, client number $K$, client sample number $K^\ast$,class number $C$, server momentum decay rate $\mu_G$, local momentum decay rate $\mu_k$
\STATE {\textbf{Initialize: }}Global model $\boldsymbol{w}^{(0)}_G=\{\theta,\phi\}$, server momentum buffer $\boldsymbol{f}^{(0)}_G=\boldsymbol{0}$. $\forall k \in [K], 
 \forall r\leq R$, local momentum buffer $\boldsymbol{f}^{(r,0)}_k=\boldsymbol{0}$
\STATE \textbf{for } each round $r=1,\cdots,R$ \textbf{do}
\STATE \hspace{0.5cm}\textbf{Client }$k \in [K]$ \textbf{in parallel do}
\STATE \hspace{1cm}Receive $\boldsymbol{w}^{(r)}$ to initialize $\boldsymbol{w}_k^{(r,0)}$.
\STATE \hspace{1cm}\textbf{for } each mini-batch $e=0,1,\cdots,E-1 $\textbf{ do}
\STATE \hspace{1.5cm}Calculate sample features $\boldsymbol{h}_i$
\STATE \hspace{1.5cm}$\%causal\quad drift-aware \quad classifier\%$
\STATE \hspace{1.5cm}Feature Calibration: $\frac{\boldsymbol{h}_i-\boldsymbol{d}_G^{(r)}-\boldsymbol{d}_k^{(r,e)}}{\|\boldsymbol{h}_i\|}$
\STATE \hspace{1.5cm}Parameter Calibration:$\frac{\phi^\top}{\|\phi\|+\gamma}$
\STATE \hspace{1.5cm}Calculate $P_e(Y_i)=\frac{\boldsymbol{h}_i-\boldsymbol{d}_G^{(r)}-\boldsymbol{d}_k^{(r,e)}}{\|h\|}\cdot \frac{\phi^\top}{\|\phi\|+\gamma}$
\STATE \hspace{1.5cm}History-Aware Average $p_{i,c}^{train}=\frac{1}{E}\sum_{e=1}^{E}P_e\left(Y_i\right)$
\STATE \hspace{1.5cm}Calculate the calibration loss and compute mini-batch gradient $\nabla F(\boldsymbol{w}_k^{(r,e)})$
\STATE \hspace{1.5cm}$\boldsymbol{f}_k^{(r,e+1)}=\mu_k\boldsymbol{f}_k^{(r,e)}+\nabla F(\boldsymbol{w}_k^{(r,e)})$
\STATE \hspace{1.5cm}$\boldsymbol{w}_k^{(r,e+1)}=\boldsymbol{w}_k^{(r,e)}-\eta\boldsymbol{f}_k^{(r,e+1)}$
\STATE \hspace{1cm}\textbf{end for}
\STATE \hspace{0.5cm} \textbf{Server:}
\STATE \hspace{1cm}Receive $\boldsymbol{g}_k^{(r)}$ from clients.
\STATE \hspace{1cm}$\boldsymbol{f}_G^{(r+1)}=\mu_G\boldsymbol{f}_G^{(r)}+\frac{1}{K^\ast}\sum_{k=1}^{K^\ast}\boldsymbol{g}_k^{(r)}$
\STATE \hspace{1cm}$\boldsymbol{w}_G^{(r+1)}=\boldsymbol{w}_G^{(r)}-\eta\boldsymbol{f}_G^{(r+1)}$
\STATE \hspace{1cm}Send $\boldsymbol{w}_G^{(r+1)}$ to clients.
\STATE \textbf{end for}
\STATE \textbf{Output: }Causal drift-aware global model $\boldsymbol{w}^{(R)}_G$
\end{algorithmic}
\label{alg1}
\end{algorithm}

In the proposed CAFE algorithm, the updated causal classifier is structured into three modules: the Feature Calibration module, which refines the features $\boldsymbol{h}_i$; the Parameter Calibration module, which adjusts the parameters $\boldsymbol{\phi}$; and the History-Aware Average module, which incorporates historical update directions. As shown in the training phase algorithm diagram in Fig. 3, detailed descriptions of each module are provided in Algorithm 1.

In each epoch, the client performs local training using the causal classifier described above, based on the model received from the server. Once local training is complete, the client uploads the updated model gradients to the parameter server for global aggregation. If the federated framework employs an asynchronous aggregation strategy, drift may still persist in the globally aggregated model. This drift needs to be mitigated during inference, a process that will be detailed in the following subsection.

\subsection{Inference Based on the Global Model}

The inference phase is depicted within the green dashed box in Fig. 2. Unlike training stage, federated learning directly utilizes the global model for inference. It is important to note that the global model is aggregated from "undrifted" models trained by each client using causal classifiers. At this stage, the global model is influenced only by feature drift caused by the number of client communications. Therefore, when using the global model for inference, the $p_{i,c}^{infer}$ only needs to eliminate the mediator and confounder effects accumulated during training due to the global optimizer.

\begin{algorithm}
\caption{CAFE (Proposed Framework in inference phase)}\label{alg:alg2}
\begin{algorithmic}
\STATE 
\STATE {\textbf{Input: }}Data $\boldsymbol{s}^{(test)}$, global model $\boldsymbol{w}^{(R)}=\{\theta,\phi\}^{(R)}$
\STATE \hspace{0.5cm}Calculate sample features $\boldsymbol{h^{(test)}}$
\STATE \hspace{0.5cm}Feature Calibration: $\frac{\boldsymbol{h}_i-\boldsymbol{d}_G^{(R)}}{\|\boldsymbol{h}_i\|}$
\STATE \hspace{0.5cm}Parameter Calibration:$\frac{\phi^\top}{\|\phi\|+\gamma}$
\STATE \hspace{0.5cm}Calculate $p_{i,c}^{infer}=\frac{\boldsymbol{h}_i-\boldsymbol{d}_G^{(R)}}{\|\boldsymbol{h}_i\|}\cdot \frac{\phi^\top}{\|\phi\|+\gamma}$
\STATE \hspace{0.5cm}Predicted 
classes $y_i=\arg\max_{i\in C} p_{i,c}^{infer}$
\STATE \textbf{Return }$y_i$
\end{algorithmic}
\label{alg2}
\end{algorithm}

Based on the causal diagram, the backdoor intervention yields:

\begin{equation}
\begin{aligned}
    &P\Big(Y_i=c|do(H=\boldsymbol{h}_i), F_k=\boldsymbol{f}_k, F_G=\boldsymbol{f}_G\Big)\\
    &=\sum_{f_G}P(Y_i=c|H=\boldsymbol{h}_i,F_G=\boldsymbol{f}_G)P(F_G=\boldsymbol{f}_G)
\end{aligned}
\end{equation}

Since $\boldsymbol{w}_G$ and $\boldsymbol{f}_G$ are constants during inference, equal to $\boldsymbol{w}_G^{(R)}$ and $\boldsymbol{f}_G^{(R)}$ at the end of training, respectively, we have $P(F_G=\boldsymbol{f}_G)=1$. Thus,
\begin{equation}
\begin{aligned}
    &P\Big(Y_i=c|do(H=\boldsymbol{h}_i), F_k=\boldsymbol{f}_k, F_G=\boldsymbol{f}_G\Big)\\
    &=\sum_{f_G}P(Y_i=c|H=\boldsymbol{h}_i,F_G=\boldsymbol{f}_G)\\
    &=\tau\frac{(\boldsymbol{w}_k^{(R)})^\top\boldsymbol{h}_i}{(\|\boldsymbol{w}_k^{(R)}\|+\gamma)\|\boldsymbol{h}_i\|}
\end{aligned}
    \end{equation}

Furthermore, we implement a deconfounding intervention on the causal path $H\rightarrow Y$. After training, $\boldsymbol{d}_G$ also becomes a constant, equal to $\boldsymbol{d}_G^{(R)}$ at the end of training. Therefore,

\begin{equation}
\begin{aligned}
    &\hat p_{i,c}^{infer}\\
    &=P\Big(Y_i=c|do(H=\boldsymbol{h}_i), F_k=\boldsymbol{f}_k, F_G=\boldsymbol{f}_G\Big)\\
    &\quad-P\Big(Y_i=c|do(H=\boldsymbol{d}_G), F_k=\boldsymbol{f}_k, F_G=\boldsymbol{f}_G\Big)\\
    &=\tau\left(\frac{(\boldsymbol{w}_k^{(R)})^\top\boldsymbol{h}_i}{(\|\boldsymbol{w}_k^{(R)}\|+\gamma)\|\boldsymbol{h}_i\|}\right.\\
    &\left.\quad-\alpha\cdot\frac{cos(\boldsymbol{h}_i,\boldsymbol{\hat{d}}_G^{(R)})\cdot(\boldsymbol{w}_k^{(R)})^\top\boldsymbol{\hat{d}}_G^{(R)}}{\|\boldsymbol{w}_k^{(R)}\|+\gamma}\right)
\end{aligned}
    \end{equation}

For the inference phase of the proposed CAFE algorithm, the detailed algorithmic description is outlined in Alg. 2. As indicated in the bottom right corner of Fig. 3, since the inference phase does not involve local optimizers from individual clients, the Feature Calibration module no longer includes the $\boldsymbol{d}_k$ component. Furthermore, as it is not influenced by local historical update directions, the inference phase of CAFE also omits the History-Aware Average module.

\section{Experiment}
\label{sec.6}
\subsection{Experimental Setup}
\begin{figure}
    \centering
    \includegraphics[width=1\linewidth]{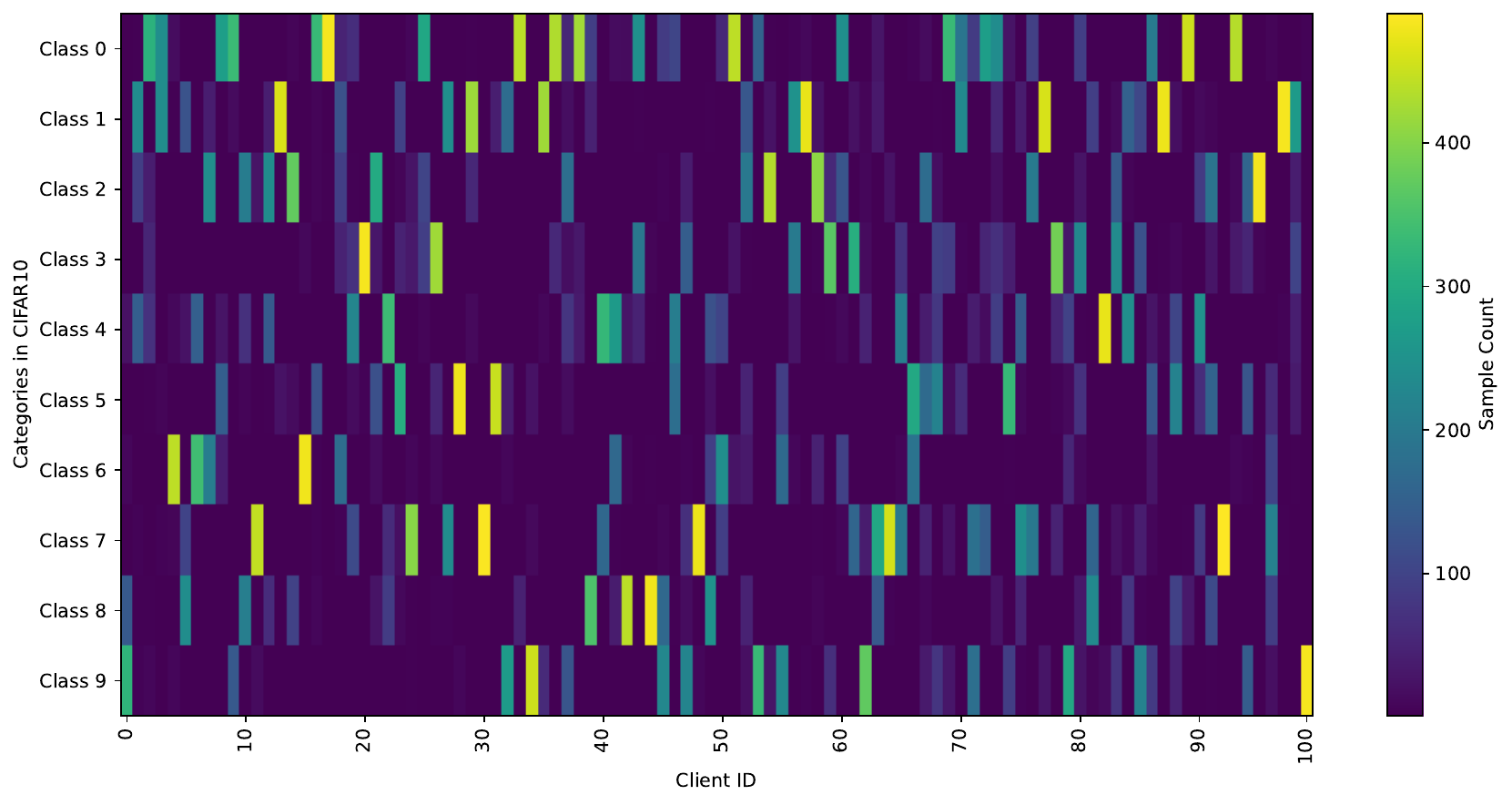}
    \caption{Enter Caption}
    \label{fig:enter-label}
\end{figure}
We evaluate our proposed \textbf{CAFE} method for federated learning classification tasks using the CIFAR-10, CIFAR-100, and Fashion-MNIST datasets. The data is split according to the original train-test ratios, and Dirichlet distributions (with parameters 0.1, 0.5, and 1.0) simulate varying degrees of data heterogeneity across clients. We use 100 clients in total, ensuring non-iid data distributions and significant class imbalance within each client. Figure 3 shows the CIFAR-10 data split result when Dir=0.1, where each color block represents the number of samples per class on a client. We also introduce a communication frequency parameter (CF) to simulate device heterogeneity in asynchronous FL, representing how often devices upload gradients based on their computational and communication capabilities, with CF values set at 0.1, 0.5, and 1.0.

Our experimental setup includes a single NVIDIA RTX 3090 GPU with 128GB memory. Client sampling rates are set at 10\%, 30\%, and 50\%, with a time threshold for waiting. The optimization algorithm used is momentum gradient descent (MGD) with a learning rate of 0.001. Each communication round includes 5 local training periods, with additional experiments conducted using 10 and 20 local training periods. The total number of communication rounds is 300. Baseline methods for comparison include FedAvg, FedProx, MOON, and FedProto. The evaluation metric is classification accuracy on client test sets, with all experiments repeated three times and average results reported.

\subsection{Benchmark}
To comprehensively evaluate the performance of the proposed CAFE, we adopted the following comparison methods:

\textbf{FedAvg}: As a typical solution introduced in \cite{ref16}, FedAvg selects a subset of clients in each communication round, initializes the client models with $\boldsymbol{w}$, updates the local models $\boldsymbol{w_i}$ by minimizing $\mathcal{L}_i(\boldsymbol{w})$, and aggregates the local models $\boldsymbol{w_i}$ into a new global model w until $\mathcal{L}(\boldsymbol{w})$ reaches a stationary point.

\textbf{FedProx}: FedProx \cite{ref13} introduces an additional Euclidean regularization term between the local and global models in the local optimization problem.

\textbf{MOON}: MOON \cite{ref53} leverages the feature similarity between the client model and the previous local model, using the global model as a contrastive regularizer to correct local training for each client.

\textbf{FedProto}: FedProto \cite{ref52} requires clients and the server to communicate via abstract class prototypes rather than gradients. It aggregates local prototypes collected from different clients and sends the global prototype back to all clients to regulate local model training.

\subsection{Experimental Results}
\begin{figure}
    \centering
    \includegraphics[width=0.8\linewidth]{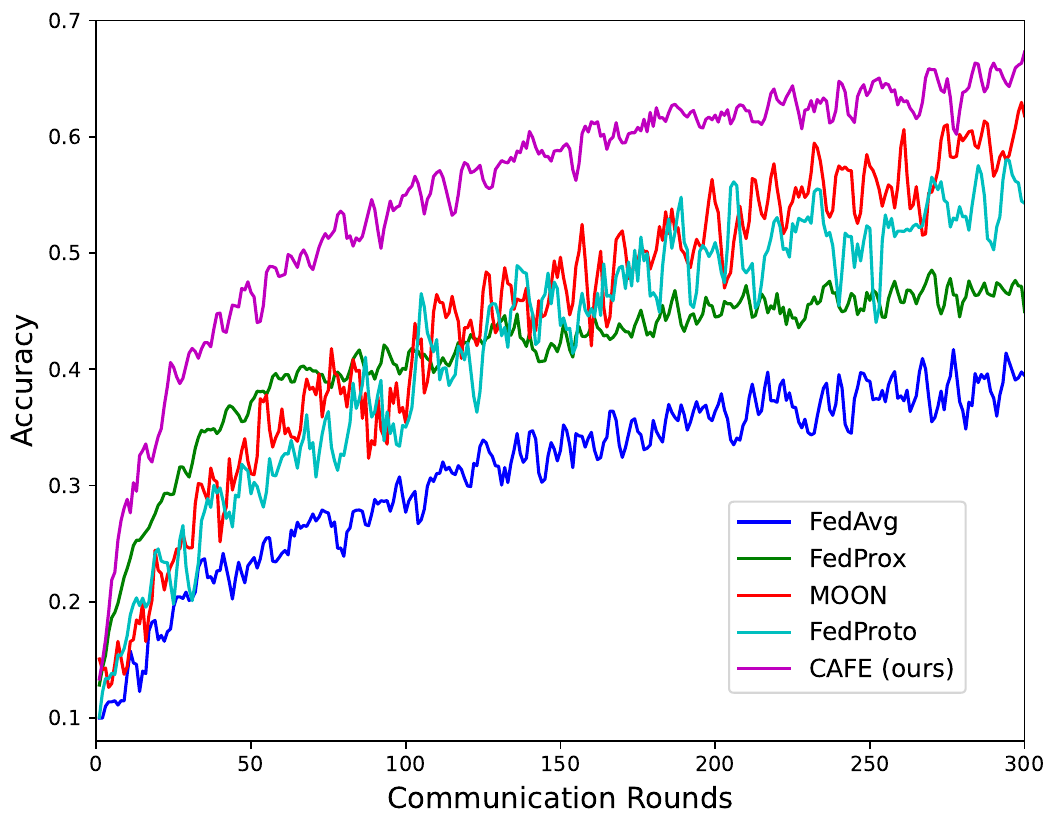}
    \caption{Accuracy}
    \label{fig:enter-label}
\end{figure}

\begin{figure}
    \centering
    \includegraphics[width=0.8\linewidth]{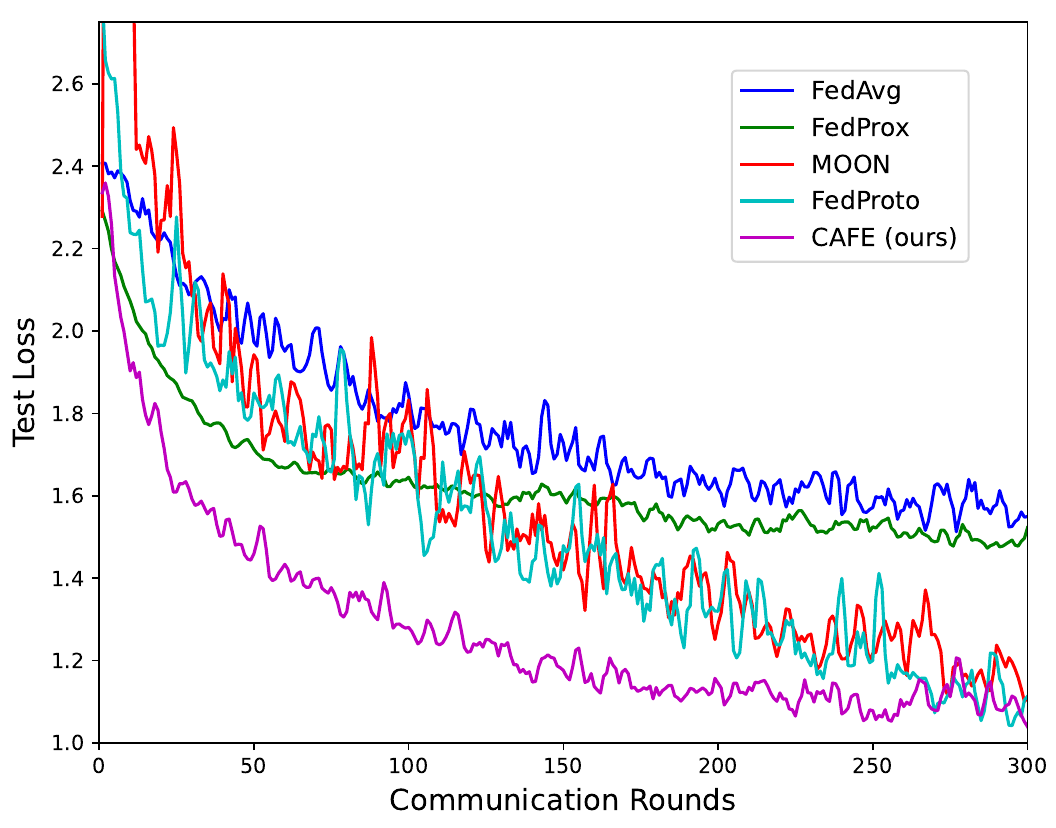}
    \caption{Test Loss}
    \label{fig:enter-label}
\end{figure}

Figure 5 presents the top-1 accuracy on the CIFAR-10-LT dataset under settings of CF=0.1, Dir=0.1, and a client sampling rate of 0.1, with local epochs set to 5 and communication rounds set to 200. Our proposed CAFE method achieves higher top-1 accuracy compared to baseline methods. Figure 6 illustrates the test loss descent trend under the same settings, demonstrating that CAFE also converges faster. We further evaluate the learning performance under various parameter settings.

\begin{table*}[h]
\centering
\caption{feature drift with Fixed Communication Frequency (CF = 0.1) and Different Class Imbalance Degrees (Dirichlet).}
\label{tab:drift_auc}
\begin{tabular}{lccc|ccc|ccc}
\toprule
\textbf{Method} & \multicolumn{3}{c|}{\textbf{FMNIST-LT}} & \multicolumn{3}{c|}{\textbf{CIFAR-10-LT}} & \multicolumn{3}{c}{\textbf{CIFAR-100-LT}} \\ 
\textbf{lr = 0.001} & Dir=0.1 & Dir=0.5 & Dir=1.0 & Dir=0.1 & Dir=0.5 & Dir=1.0 & Dir=0.1 & Dir=0.5 & Dir=1.0 \\ 
\midrule
FedAvg   & 87.22 & 87.89 & 89.91 & 41.71 & 43.31 & 45.59 & 36.00 & 36.52 & 37.13 \\
FedProx  & 88.51 & 88.94 & 90.47 & 48.53 & 51.57 & 56.66 & 36.71 & 37.36 & 38.22 \\
MOON     & 88.36 & \underline{90.40} & 91.35 & \underline{62.96} & \underline{64.66} & \underline{67.95} & 37.95 & 38.83 & \underline{39.93} \\
FedProto & \underline{88.98} & 90.33 & \underline{91.67} & 58.02 & 61.29 & 67.10 & \underline{38.12} & \underline{39.21} & 39.84 \\
CAFE (Ours) & \textbf{89.44} & \textbf{90.88} & \textbf{91.70} & \textbf{67.29} & \textbf{68.72} & \textbf{71.35} & \textbf{40.30} & \textbf{40.98} & \textbf{41.55} \\
\bottomrule
\end{tabular}
\end{table*}

\begin{table*}[h]
\centering
\caption{feature drift with fixed class imbalance degree Dirichlet (Dir=0.1) and different Communication Frequency (CF, linear ratio).}
\label{tab:drift_auc}
\begin{tabular}{lccc|ccc|ccc}
\toprule
\textbf{Method} & \multicolumn{3}{c|}{\textbf{FMNIST-LT}} & \multicolumn{3}{c|}{\textbf{CIFAR-10-LT}} & \multicolumn{3}{c}{\textbf{CIFAR-100-LT}} \\ 
\textbf{lr = 0.001} & CF=0.1 & CF=0.5 & CF=1.0 & CF=0.1 & CF=0.5 & CF=1.0 & CF=0.1 & CF=0.5 & CF=1.0 \\ 
\midrule
FedAvg   & 87.22 & 87.92 & 89.14 & 41.71 & 43.11 & 43.57 & 36.00 & 37.96 & 38.55 \\
FedProx  & 88.51 & 88.83 & 89.99 & 48.53 & 49.99 & 52.10 & 36.71 & 37.53 & 38.15 \\
MOON     & \underline{89.36} & 90.00 & 90.68 & \underline{62.96} & \underline{64.51} & \underline{67.32} & 37.95 & 38.67 & \underline{40.13} \\
FedProto & 89.41 & \underline{90.36} & \underline{90.51} & 58.02 & 60.66 & 64.45 & \underline{38.12} & \underline{38.99} & 39.22 \\
CAFE (Ours) & \textbf{89.44} & \textbf{90.47} & \textbf{90.92} & \textbf{67.29} & \textbf{68.81} & \textbf{69.64} & \textbf{40.30} & \textbf{40.90} & \textbf{42.11} \\
\bottomrule
\end{tabular}
\end{table*}

\begin{figure}
    \centering
    \includegraphics[width=1\linewidth]{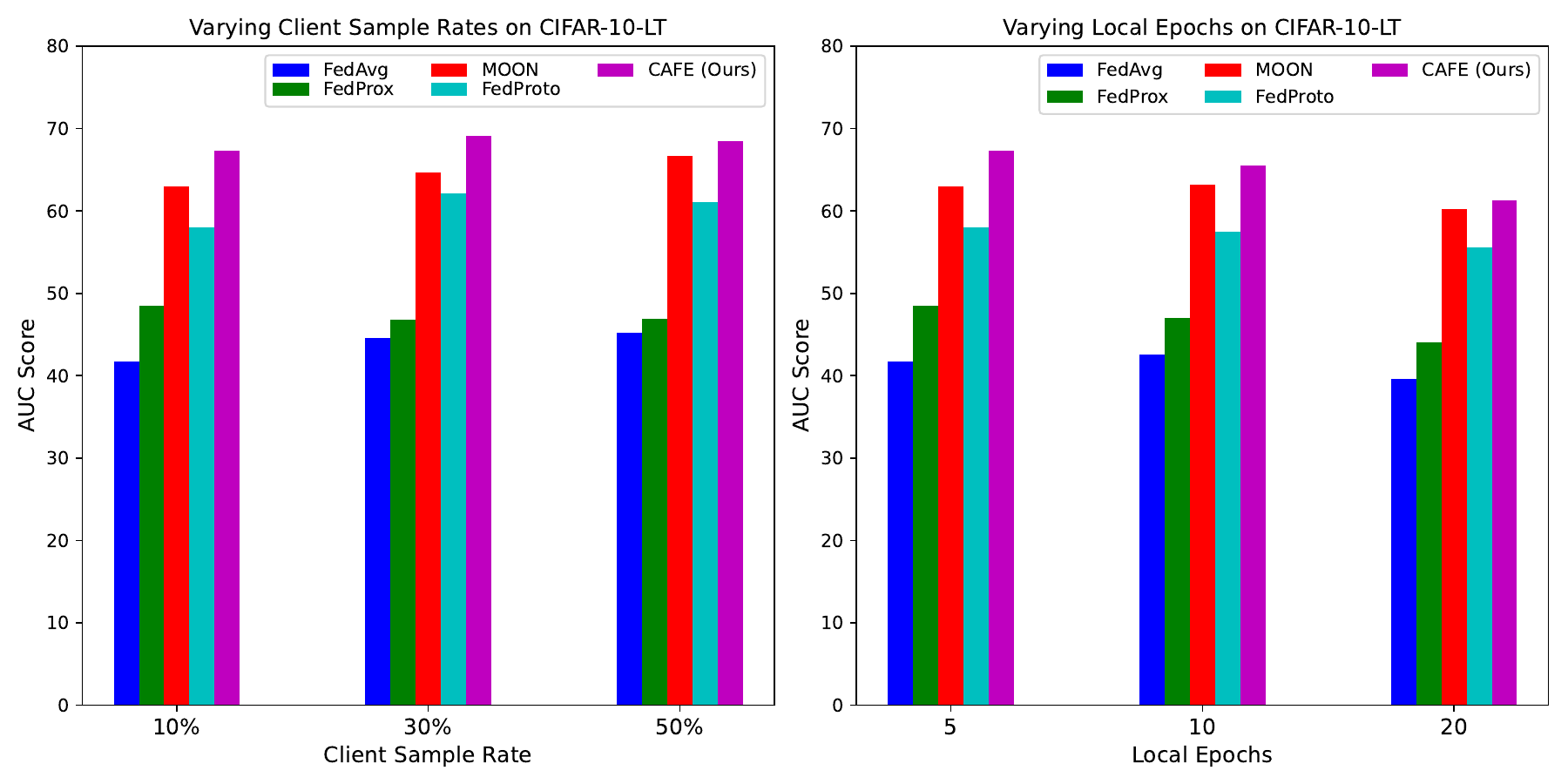}
    \caption{AUC results with varying client sample rates and local epochs on CIFAR-10-LT (lr=0.001).}
    \label{fig:enter-label}
\end{figure}

\subsubsection{Performance under class imbalance: }
Table I demonstrates that CAFE provides significant gains under different label skew settings. Regardless of the dataset, \(Dir = 1\) represents a data distribution closer to uniform, while \(Dir = 0.1\) indicates a more severe label distribution skew, with varying numbers of samples per client. The results show that CAFE consistently outperforms state-of-the-art methods across all datasets, with the performance gap widening as the complexity and heterogeneity of the datasets increase. For example, although the performance improvement of CAFE on the FMNIST-LT dataset is marginal, it is more pronounced on the CIFAR-10 dataset and becomes even more significant as the Dir value increases. This makes CAFE particularly suitable for scenarios with extreme data distributions. On the CIFAR-10-LT dataset, when \(Dir = 0.1\) and \(CF = 0.1\), the average test accuracy of CAFE is 67.29\%, compared to 62.96\% for the second-best model. This significant improvement is consistently observed across all cases, demonstrating the effectiveness of our proposed CAFE method. Moreover, for more challenging tasks, such as CIFAR-100-LT, the proposed method still achieved the best performance (approximately a 2\% accuracy improvement).

\subsubsection{Performance under participant imbalance: }
According to Table II, our method achieved higher accuracy than all baselines on the three datasets. Although the performance of all methods decreases as participant imbalance increases, the decline in CAFE is much smaller than that of other methods. For instance, when \(CF\) changes from 1.0 to 0.5, the top-1 accuracy of most baselines on CIFAR-10-LT drops by approximately 2\% to 3\%, which is about three times that of FedFA. When tested on the complex dataset CIFAR-100-LT, there is also a performance improvement of over 2\%.

\subsubsection{Performance under Different Client Sampling Rates: }
We further explore the impact of the federated setting. As shown in Table III, a larger client sampling rate does not always lead to better test accuracy for all methods, but there is an overall upward trend. Specifically, the FedProx method first decreases and then increases, while the FedProto method first increases and then decreases, similar to our proposed CAFE method. Overall, regardless of the client sampling ratio, CAFE consistently achieves the best performance.

\subsubsection{Performance under Different Local Epochs: }
As shown in Table IV, a larger number of local epochs negatively impacts performance, as larger local epochs make the update directions more scattered under extreme data heterogeneity. The CAFE method, which employs a causal classifier to learn invariant features during local training, continues to outperform other baselines.

\subsection{Ablation Studies}

\begin{figure}
    \centering
    \includegraphics[width=0.8\linewidth]{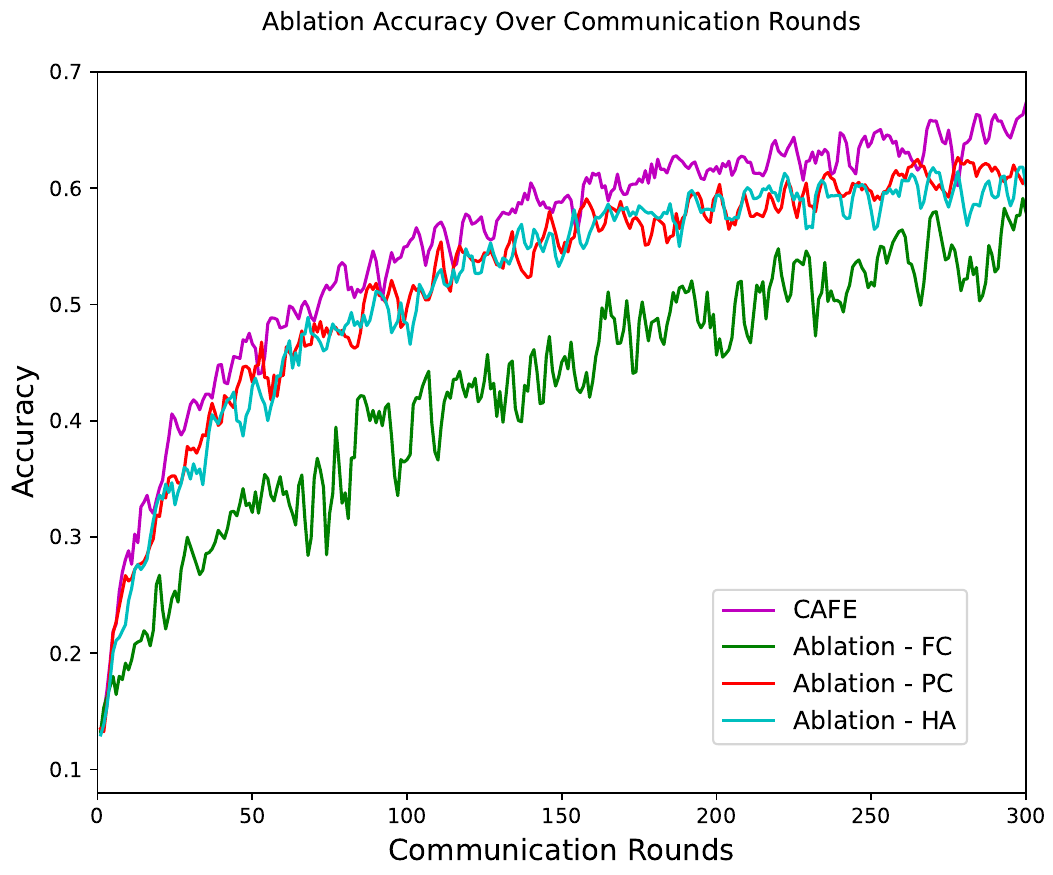}
    \caption{ablation accuracy}
    \label{fig:enter-label}
\end{figure}

As illustrated in Fig.6, we conducted ablation studies to investigate the impact of each module in the model. Besides the proposed CAFE algorithm, the other three curves represent the performance of CAFE without Parameter Calibration (PC), CAFE without Feature Calibration (FC), and CAFE without History-Aware Average (HA). The experimental results show that the model's performance significantly deteriorates when any of the PC, FC, or HA modules are removed, validating the effectiveness of the collaborative functioning of these three modules. Additionally, the results of the ablation studies align with the main idea of this paper, which posits that a complete causal intervention process can correct drifts and achieve more satisfactory performance.

Observations reveal that removing the Parameter Calibration or History-Aware Average module alone results in model performance comparable to the best baseline. Feature Calibration plays the most critical role in CAFE, as analyzed in Section 4, due to the significant feature drift present during the model training process.

\section{Conclusion}
\label{sec.7}
This paper is the first to reveal the drifts inherent in federated learning training mechanisms and identify them as harmful confounding factors from a causal perspective. Consequently, we propose a Causal Drift-Aware Federated Learning (CAFE) method aimed at eliminating drifts in embedding vectors caused by spurious correlations and indirect effects. Within the CAFE framework, we construct causal graphs for both the training and inference processes of federated learning, mitigating the interference of confounders and mediators to capture the invariant components within embedding vectors. Both theoretical analysis and experimental results demonstrate that the proposed CAFE method significantly outperforms existing approaches. The performance improvement is not only due to the resolution of specific imbalanced scenarios but also exhibits strong robustness across diverse environments. Moreover, our method determines drifts based on model parameters without relying on access to the class distributions of other clients, thus enabling operation under privacy-preserving conditions.

\bibliographystyle{IEEEtran}

\bibliography{reference}

% Generated by IEEEtran.bst, version: 1.14 (2015/08/26)
\begin{thebibliography}{10}
\providecommand{\url}[1]{#1}
\csname url@samestyle\endcsname
\providecommand{\newblock}{\relax}
\providecommand{\bibinfo}[2]{#2}
\providecommand{\BIBentrySTDinterwordspacing}{\spaceskip=0pt\relax}
\providecommand{\BIBentryALTinterwordstretchfactor}{4}
\providecommand{\BIBentryALTinterwordspacing}{\spaceskip=\fontdimen2\font plus
\BIBentryALTinterwordstretchfactor\fontdimen3\font minus \fontdimen4\font\relax}
\providecommand{\BIBforeignlanguage}[2]{{%
\expandafter\ifx\csname l@#1\endcsname\relax
\typeout{** WARNING: IEEEtran.bst: No hyphenation pattern has been}%
\typeout{** loaded for the language `#1'. Using the pattern for}%
\typeout{** the default language instead.}%
\else
\language=\csname l@#1\endcsname
\fi
#2}}
\providecommand{\BIBdecl}{\relax}
\BIBdecl

\bibitem{ref16}
B.~McMahan, E.~Moore, D.~Ramage, S.~Hampson, and B.~A. y~Arcas, ``Communication-efficient learning of deep networks from decentralized data,'' in \emph{Artificial intelligence and statistics}.\hskip 1em plus 0.5em minus 0.4em\relax PMLR, 2017, pp. 1273--1282.

\bibitem{ref27}
Z.~Charles and J.~Kone{\v{c}}n{\`y}, ``Convergence and accuracy trade-offs in federated learning and meta-learning,'' in \emph{International Conference on Artificial Intelligence and Statistics}.\hskip 1em plus 0.5em minus 0.4em\relax PMLR, 2021, pp. 2575--2583.

\bibitem{ref23}
S.~P. Karimireddy, S.~Kale, M.~Mohri, S.~Reddi, S.~Stich, and A.~T. Suresh, ``Scaffold: Stochastic controlled averaging for federated learning,'' in \emph{International conference on machine learning}.\hskip 1em plus 0.5em minus 0.4em\relax PMLR, 2020, pp. 5132--5143.

\bibitem{ref28}
J.~Wang, Q.~Liu, H.~Liang, G.~Joshi, and H.~V. Poor, ``Tackling the objective inconsistency problem in heterogeneous federated optimization,'' \emph{Advances in neural information processing systems}, vol.~33, pp. 7611--7623, 2020.

\bibitem{ref15}
Y.~Zhao, M.~Li, L.~Lai, N.~Suda, D.~Civin, and V.~Chandra, ``Federated learning with non-iid data,'' \emph{arXiv preprint arXiv:1806.00582}, 2018.

\bibitem{ref33}
\BIBentryALTinterwordspacing
Y.~J. Cho, J.~Wang, and G.~Joshi, ``Client selection in federated learning: Convergence analysis and power-of-choice selection strategies,'' 2021. [Online]. Available: \url{https://openreview.net/forum?id=PYAFKBc8GL4}
\BIBentrySTDinterwordspacing

\bibitem{ref34}
T.~Huang, W.~Lin, W.~Wu, L.~He, K.~Li, and A.~Y. Zomaya, ``An efficiency-boosting client selection scheme for federated learning with fairness guarantee,'' \emph{IEEE Transactions on Parallel and Distributed Systems}, vol.~32, no.~7, pp. 1552--1564, 2020.

\bibitem{ref35}
T.~Huang, W.~Lin, L.~Shen, K.~Li, and A.~Y. Zomaya, ``Stochastic client selection for federated learning with volatile clients,'' \emph{IEEE Internet of Things Journal}, vol.~9, no.~20, pp. 20\,055--20\,070, 2022.

\bibitem{ref36}
M.~Yang, X.~Wang, H.~Zhu, H.~Wang, and H.~Qian, ``Federated learning with class imbalance reduction,'' in \emph{2021 29th European Signal Processing Conference (EUSIPCO)}.\hskip 1em plus 0.5em minus 0.4em\relax IEEE, 2021, pp. 2174--2178.

\bibitem{ref37}
B.~Nader, H.~Jiahui, Z.~Hui, and L.~Xin, ``Adaptive federated dropout: Improving communication efficiency and generalization for federated learning,'' in \emph{Proceedings of the IEEE Conference on Computer Communications Workshops, Vancouver, BC, Canada}, 2021, pp. 10--13.

\bibitem{ref38}
Y.~Jiang, S.~Wang, V.~Valls, B.~J. Ko, W.-H. Lee, K.~K. Leung, and L.~Tassiulas, ``Model pruning enables efficient federated learning on edge devices,'' \emph{IEEE Transactions on Neural Networks and Learning Systems}, vol.~34, no.~12, pp. 10\,374--10\,386, 2022.

\bibitem{ref39}
S.~Horvath, S.~Laskaridis, M.~Almeida, I.~Leontiadis, S.~Venieris, and N.~Lane, ``Fjord: Fair and accurate federated learning under heterogeneous targets with ordered dropout,'' \emph{Advances in Neural Information Processing Systems}, vol.~34, pp. 12\,876--12\,889, 2021.

\bibitem{ref54}
J.~Zhang, C.~Li, J.~Qi, and J.~He, ``A survey on class imbalance in federated learning,'' \emph{arXiv preprint arXiv:2303.11673}, 2023.

\bibitem{ref55}
W.~Huang, M.~Ye, Z.~Shi, G.~Wan, H.~Li, B.~Du, and Q.~Yang, ``Federated learning for generalization, robustness, fairness: A survey and benchmark,'' \emph{IEEE Transactions on Pattern Analysis and Machine Intelligence}, 2024.

\bibitem{ref57}
Y.~Zhang, D.~Zeng, J.~Luo, X.~Fu, G.~Chen, Z.~Xu, and I.~King, ``A survey of trustworthy federated learning: Issues, solutions, and challenges,'' \emph{ACM Transactions on Intelligent Systems and Technology}, vol.~15, no.~6, pp. 1--47, 2024.

\bibitem{ref40}
K.~Cao, C.~Wei, A.~Gaidon, N.~Arechiga, and T.~Ma, ``Learning imbalanced datasets with label-distribution-aware margin loss,'' \emph{Advances in neural information processing systems}, vol.~32, 2019.

\bibitem{ref13}
T.~Li, A.~K. Sahu, M.~Zaheer, M.~Sanjabi, A.~Talwalkar, and V.~Smith, ``Federated optimization in heterogeneous networks,'' \emph{Proceedings of Machine learning and systems}, vol.~2, pp. 429--450, 2020.

\bibitem{ref41}
X.~Shang, Y.~Lu, G.~Huang, and H.~Wang, ``Federated learning on heterogeneous and long-tailed data via classifier re-training with federated features,'' \emph{arXiv preprint arXiv:2204.13399}, 2022.

\bibitem{ref42}
Y.~J. Cho, J.~Wang, and G.~Joshi, ``Towards understanding biased client selection in federated learning,'' in \emph{International Conference on Artificial Intelligence and Statistics}.\hskip 1em plus 0.5em minus 0.4em\relax PMLR, 2022, pp. 10\,351--10\,375.

\bibitem{ref43}
M.~Mohri, G.~Sivek, and A.~T. Suresh, ``Agnostic federated learning,'' in \emph{International conference on machine learning}.\hskip 1em plus 0.5em minus 0.4em\relax PMLR, 2019, pp. 4615--4625.

\bibitem{ref24}
\BIBentryALTinterwordspacing
S.~J. Reddi, Z.~Charles, M.~Zaheer, Z.~Garrett, K.~Rush, J.~Kone{\v{c}}n{\'y}, S.~Kumar, and H.~B. McMahan, ``Adaptive federated optimization,'' in \emph{International Conference on Learning Representations}, 2021. [Online]. Available: \url{https://openreview.net/forum?id=LkFG3lB13U5}
\BIBentrySTDinterwordspacing

\bibitem{ref58}
A.~Z. Tan, H.~Yu, L.~Cui, and Q.~Yang, ``Towards personalized federated learning,'' \emph{IEEE transactions on neural networks and learning systems}, vol.~34, no.~12, pp. 9587--9603, 2022.

\bibitem{ref51}
Z.~Xu, D.~Yang, M.~Li, Y.~Wang, Z.~Chen, J.~Chen, J.~Wei, and L.~Zhang, ``Debiased multimodal understanding for human language sequences,'' \emph{arXiv e-prints}, pp. arXiv--2403, 2024.

\bibitem{ref50}
J.~Chen, H.~Guan, H.~Li, F.~Zhang, L.~Huang, G.~Pang, and X.~Jin, ``Pacific: Enhancing sequential recommendation via preference-aware causal intervention and counterfactual data augmentation,'' in \emph{Proceedings of the 33rd ACM International Conference on Information and Knowledge Management}, 2024, pp. 249--258.

\bibitem{ref56}
Q.~Ye, A.~A. Amini, and Q.~Zhou, ``Federated learning of generalized linear causal networks,'' \emph{IEEE Transactions on Pattern Analysis and Machine Intelligence}, 2024.

\bibitem{ref48}
D.~Yang, X.~He, J.~Wang, G.~Yu, C.~Domeniconi, and J.~Zhang, ``Federated causality learning with explainable adaptive optimization,'' in \emph{Proceedings of the AAAI Conference on Artificial Intelligence}, vol.~38, no.~15, 2024, pp. 16\,308--16\,315.

\bibitem{ref49}
T.~V. Vo, T.~N. Hoang, Y.~Lee, and T.-Y. Leong, ``Federated estimation of causal effects from observational data,'' \emph{arXiv preprint arXiv:2106.00456}, 2021.

\bibitem{ref47}
X.~Guo, K.~Yu, L.~Liu, and J.~Li, ``Fedcsl: A scalable and accurate approach to federated causal structure learning,'' in \emph{Proceedings of the AAAI Conference on Artificial Intelligence}, vol.~38, no.~11, 2024, pp. 12\,235--12\,243.

\bibitem{ref46}
T.~V. Vo, A.~Bhattacharyya, Y.~Lee, and T.-Y. Leong, ``An adaptive kernel approach to federated learning of heterogeneous causal effects,'' \emph{Advances in Neural Information Processing Systems}, vol.~35, pp. 24\,459--24\,473, 2022.

\bibitem{ref3}
T.~Zhou, J.~Zhang, and D.~H. Tsang, ``Fedfa: Federated learning with feature anchors to align features and classifiers for heterogeneous data,'' \emph{IEEE Transactions on Mobile Computing}, 2023.

\bibitem{ref1}
K.~Tang, J.~Huang, and H.~Zhang, ``Long-tailed classification by keeping the good and removing the bad momentum causal effect,'' \emph{Advances in neural information processing systems}, vol.~33, pp. 1513--1524, 2020.

\bibitem{ref53}
Q.~Li, B.~He, and D.~Song, ``Model-contrastive federated learning,'' in \emph{Proceedings of the IEEE/CVF conference on computer vision and pattern recognition}, 2021, pp. 10\,713--10\,722.

\bibitem{ref52}
Y.~Tan, G.~Long, L.~Liu, T.~Zhou, Q.~Lu, J.~Jiang, and C.~Zhang, ``Fedproto: Federated prototype learning across heterogeneous clients,'' in \emph{Proceedings of the AAAI Conference on Artificial Intelligence}, vol.~36, no.~8, 2022, pp. 8432--8440.

\end{thebibliography}

\vspace{11pt}

\vfill

\end{document}